\crefname{ineq}{inequality}{inequalities}
\theoremstyle{plain}
\newtheorem{theorem}{Theorem}[section]
\newtheorem{lemma}[theorem]{Lemma}
\newtheorem{corollary}[theorem]{Corollary}
\newtheorem{assumption}[theorem]{Assumption}
\theoremstyle{definition}
\newtheorem{definition}[theorem]{Definition}
\theoremstyle{remark}
\newtheorem{remark}[theorem]{Remark}
\newcommand*{\N}{{\mathbb{N}}}
\let\R\relax
\newcommand*{\R}{{\mathbb{R}}}
\newcommand*{\cC}{{\mathcal{C}}}
\newcommand*{\cCo}{{\mathcal{C}_\text{orth}}}
\newcommand*{\nher}{\tilde{H}}
\let\Pr\relax
\newcommand*{\Pr}{\mathbb{P}}
\let\Ex\relax
\newcommand*{\Ex}{\mathbb{E}}
\newcommand{\Ind}{\mathbbm{1}}
\DeclarePairedDelimiter{\inn}{\langle}{\rangle}
\DeclareMathOperator{\sda}{SDA}
\DeclareMathOperator{\sgn}{sign}
\DeclareMathOperator{\relu}{ReLU}
\newcommand*{\sigmoid}{\sigma}
\DeclareMathOperator{\unif}{Unif}
\renewcommand{\th}{^\text{th}}
\let\hat\widehat
\let\poly\relax
\DeclareMathOperator{\poly}{poly}
\newcommand{\abs}[1]{\left\vert {#1} \right\vert}
\newcommand\norm[1]{\left\lVert#1\right\rVert}
\newcommand{\dotp}[2]{#1~{\cdot}~#2}
\title{Superpolynomial Lower Bounds for Learning One-Layer Neural Networks using Gradient Descent}
\author[1]{Surbhi Goel}
\author[1]{Aravind Gollakota}
\author[2]{Zhihan Jin}
\author[1]{Sushrut Karmalkar}
\author[1]{Adam Klivans}
\affil[1]{Department of Computer Science, University of Texas at Austin}
\affil[2]{Department of Computer Science, Shanghai Jiao Tong University}
\date{June 22, 2020}
\begin{document}
	
	\maketitle
	
	\begin{abstract}
		We prove the first superpolynomial lower bounds for learning one-layer neural networks with respect to the Gaussian distribution using gradient descent.  We show that any classifier trained using gradient descent with respect to square-loss will fail to achieve small test error in polynomial time given access to samples labeled by a one-layer neural network.    For classification, we give a stronger result, namely that any statistical query (SQ) algorithm (including gradient descent) will fail to achieve small test error in polynomial time. Prior work held only for gradient descent run with small batch sizes, required sharp activations, and applied to specific classes of queries.  Our lower bounds hold for broad classes of activations including ReLU and sigmoid.  The core of our result relies on a novel construction of a simple family of neural networks that are exactly orthogonal with respect to all spherically symmetric distributions.
	\end{abstract}
	
	\section{Introduction}
	
	A major challenge in the theory of deep learning is to understand when gradient descent can efficiently learn simple families of neural networks. The associated optimization problem is nonconvex and well known to be computationally intractable in the worst case.  For example, cyphertexts from public-key cryptosystems can be encoded into a training set labeled by simple neural networks \cite{klivans2009cryptographic}, implying that the corresponding learning problem is as hard as breaking cryptographic primitives.   These hardness results, however, rely on discrete representations and produce relatively unrealistic joint distributions.
	
	\paragraph{Our Results.} In this paper we give the first superpolynomial lower bounds for learning neural networks using gradient descent in arguably the simplest possible setting: we assume the marginal distribution is a spherical Gaussian, the labels are noiseless and are exactly equal to the output of a one-layer neural network (a linear combination of say ReLU or sigmoid activations), and the goal is to output a classifier whose test error (measured by square-loss) is small.  We prove---unconditionally---that gradient descent fails to produce a classifier with small square-loss if it is required to run in polynomial time in the dimension.  Our lower bound depends only on the algorithm used (gradient descent) and not on the architecture of the underlying classifier.  That is, our results imply that current popular heuristics such as running gradient descent on an overparameterized network (for example, working in the NTK regime \cite{JacotHG18}) will require superpolynomial time to achieve small test error. 
	
	\paragraph{Statistical Queries.} We prove our lower bounds in the now well-studied statistical query (SQ) model of \cite{kearns1998efficient} that captures most learning algorithms used in practice.  
	For a loss function $\ell$ and a hypothesis $h_\theta$ parameterized by $\theta$, the true population loss with respect to joint distribution $D$ on $X \times Y$ is given by $\Ex_{(x, y) \sim D}[\ell(h_\theta(x), y)]$, and the gradient with respect to $\theta$ is given by $\Ex_{(x, y) \sim D}[\ell'(h_\theta(x), y) \nabla_\theta h_\theta(x)]$.  
    In the SQ model, we specify a query function $\phi(x,y)$ and receive an estimate of $|\Ex_{(x,y) \sim D}[\phi(x,y)]|$ to within some tolerance parameter $\tau$.  An important special class of queries are correlational or inner-product queries, where the query function $\phi$ is defined only on $X$ and we receive an estimate of $|\Ex_{(x,y) \sim D}[\phi(x)\cdot y]|$ within some tolerance $\tau$.  It is not difficult to see that (1) the gradient of a population loss can be approximated to within $\tau$ using statistical queries of tolerance $\tau$ and (2) for square-loss only inner-product queries are required.
    
	Since the convergence analysis of gradient descent holds given sufficiently strong approximations of the gradient, lower bounds for learning in the SQ model \cite{kearns1998efficient, blum1994weakly, szorenyi2009characterizing, feldman2012complete, feldman2017general} directly imply unconditional lower bounds on the running time for gradient descent to achieve small error.
    We give the first superpolynomial lower bounds for learning one-layer networks with respect to any Gaussian distribution for any SQ algorithm that uses inner product queries:
	\begin{theorem}[informal]
		Let $\cC$ be a class of real-valued concepts defined by one-layer single-output neural networks with input dimension $n$ and $m$ hidden units (ReLU or sigmoid); i.e., functions of the form $f(x) = \sum_{i=1}^{m} a_i \sigma(w_i \cdot x)$. 
		Then learning $\cC$ under the standard Gaussian $\mathcal{N}(0, I_n)$ in the SQ model with inner-product queries requires $n^{\Omega(\log m)}$ queries for any tolerance $\tau = n^{-\Omega(\log m)}$.
	\end{theorem}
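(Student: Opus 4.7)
The plan is to reduce this to the standard SQ dimension lower bound for correlational queries: if $\{f_1,\dots,f_N\}\subseteq \cC$ is a family with $\|f_i\|_{L^2(\mathcal{N}(0,I_n))} = \Theta(1)$ and pairwise correlations bounded by $\gamma$, then any SQ learner using inner-product queries of tolerance $\tau$ must make $\Omega\!\bigl(N\tau^2/(\tau^2+\gamma)\bigr)$ queries to identify a uniformly random member. To obtain the theorem it therefore suffices to exhibit such a family within one-layer networks of width $m$ with $N = n^{\Omega(\log m)}$ and $\gamma = n^{-\Omega(\log m)}$.

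The first and most delicate step is the construction of a one-dimensional ``gadget'' $\phi(t) = \sum_{i=1}^m a_i\, \sigma(\alpha_i t + \beta_i)$, expressible with only $m$ activations of the given type, whose Hermite expansion under $\mathcal{N}(0,1)$ satisfies $\hat\phi(j)=0$ for $0\le j\le k-1$ and $\hat\phi(k)=\Theta(\|\phi\|_{L^2})$, with $k=\Theta(\log m)$. Enforcing the first $k-1$ Hermite coefficients to vanish is a linear system in the $a_i$ of rank at most $k \ll m$, so solvability is immediate; for sigmoid this corresponds to inverting a Vandermonde-like moment matrix in the shifts $\beta_i$, while for ReLU one exploits the explicit formula $\widehat{\relu}(j)\asymp j^{-3/2}$ (for even $j$) and chooses shifts and scales so that low-degree coefficients annihilate exactly while the $k$-th coefficient remains a constant fraction of the $L^2$-norm of $\phi$. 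Lifting via $\phi_w(x)=\phi(w\cdot x)$ for any unit $w\in S^{n-1}$ then yields a legitimate element of $\cC$.

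The pairwise inner product of two lifted gadgets is then controlled by the Hermite identity
\[
    \Ex_{x\sim\mathcal{N}(0,I_n)}\bigl[\phi_w(x)\,\phi_{w'}(x)\bigr] \;=\; \sum_{j\ge 0}\hat\phi(j)^2\,(w\cdot w')^j \;=\; O\bigl(|w\cdot w'|^k\bigr),
\]
since by construction the first nonzero Hermite coefficient occurs at degree $k$. Taking unit vectors $w_1,\dots,w_N\in S^{n-1}$ with pairwise inner products of order $\sqrt{(\log N)/n}$ -- a random sample of size $N = n^{C\log m}$ suffices by standard concentration -- yields $\gamma = \tilde O(n^{-k/2}) = n^{-\Omega(\log m)}$, while $N = n^{\Omega(\log m)}$ as required. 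Plugging these parameters into the SQ dimension bound gives the stated $n^{\Omega(\log m)}$ query lower bound at tolerance $\tau = n^{-\Omega(\log m)}$.

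The main obstacle is the gadget construction: meeting the vanishing Hermite conditions is easy by counting degrees of freedom, but controlling $\hat\phi(k)/\|\phi\|_{L^2}$ from below is subtle, because naive solutions to the cancellation system can make the $a_i$ blow up so that the $L^2$-norm is dominated by cross-terms rather than the signal at degree $k$. One way to handle this is to work in the frequency domain, choosing $\phi$ as a near-isometric ``Hermite bandlimit'' of $\sigma$ around degree $k$ using orthogonality of the activation's Fourier transform under shifts. The paper's stronger claim that the constructed networks are \emph{exactly} orthogonal with respect to all spherically symmetric distributions presumably reflects an algebraic identity that goes beyond this approximate argument, but the approximate Hermite version above is already enough to deliver the stated SQ lower bound.
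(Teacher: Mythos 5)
Your reduction skeleton (nearly orthogonal family $\Rightarrow$ SQ dimension $\Rightarrow$ query lower bound) is fine, and the lifting identity $\Ex[\phi_w\phi_{w'}]=\sum_j\hat\phi(j)^2(w\cdot w')^j$ with random directions is a legitimate alternative route, close in spirit to the independent work of Diakonikolas et al.\ cited in the paper. But the proof has a genuine gap exactly where you locate the ``main obstacle,'' and for the class actually stated in the theorem the gap is not just technical. The theorem concerns networks $\sum_i a_i\sigma(w_i\cdot x)$ with \emph{zero bias}, and your gadget $\phi(t)=\sum_i a_i\,\sigma(\alpha_i t+\beta_i)$ uses biases, so the resulting hard family lies outside $\cC$ and a lower bound for it does not imply one for $\cC$. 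Worse, if you drop the biases the one-dimensional gadget collapses for ReLU: by positive homogeneity, $\relu(\alpha t)=\alpha\relu(t)$ for $\alpha>0$, so the span of zero-bias one-dimensional ReLU combinations is the two-dimensional space $\{a\relu(t)+b\relu(-t)\}=\{ct+d|t|\}$, and one cannot annihilate the first $k-1$ Hermite coefficients while keeping the $k$-th one alive for any $k\geq 3$. For sigmoid the span is richer, but the quantitative claim $\hat\phi(k)=\Theta(\|\phi\|_{L^2})$ after forcing $k-1$ linear cancellation constraints is precisely the hard analytic content, and ``invert a Vandermonde-like matrix'' plus a vague ``Hermite bandlimit'' does not establish it (the Hermite coefficients of $\sigma(\alpha t)$ are not powers of $\alpha$, so there is no actual Vandermonde structure to invert, and the worry you raise about the $a_i$ blowing up is exactly what must be ruled out).

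The paper avoids this entirely by using a \emph{multivariate} gadget: for each $k$-subset $S$ it takes $g_S(x)=\sum_{w\in\{\pm1\}^k}\chi(w)\,\phi(w\cdot x_S/\sqrt{k})$, i.e.\ $2^k=m$ zero-bias units pointing along the sign vectors of the coordinate subspace indexed by $S$, with parity signs as outer weights. The identity $g_S(x\circ z)=\chi_S(z)g_S(x)$ then gives \emph{exact} orthogonality of $\{f_S\}$ under any sign-symmetric distribution, with no need to tune coefficients, and the remaining work (which is where the paper's effort goes) is a norm lower bound: a Hermite computation shows $\|g_S\|^2=4^k\sum_i(\hat\phi_i^2/k^i)\sum\binom{i}{i_1,\dots,i_k}$ over odd compositions, which is controlled using known asymptotics of the ReLU and sigmoid Hermite coefficients (the latter via a complex-analytic result), together with a truncation argument for the unbounded ReLU case. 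If you want to salvage your route, you would need to either allow biases (proving a different theorem) or carry out the degree-$k$ cancellation with a genuinely multivariate construction --- at which point you have essentially rediscovered the paper's gadget.
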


	In particular, this rules out any approach for learning one-layer neural networks in polynomial-time that performs gradient descent on any polynomial-size classifier with respect to square-loss or logistic loss.  For classification, we obtain significantly stronger results and rule out general SQ algorithms that run in polynomial-time (e.g., gradient descent with respect to any polynomial-size classifier and any polynomial-time computable loss).  In this setting, our labels are $\{\pm 1\}$ and correspond to the {\em softmax} of an unknown one-layer neural network.  We prove the following:
	
	
	
	
	
	\begin{theorem}[informal]
	Let $\cC$ be a class of real-valued concepts defined by a one-layer neural network in $n$ dimensions with $m$ hidden units (ReLU or sigmoid) feeding into any odd, real-valued output node with range $[-1,1]$. Let $D'$ be a distribution on $\R^n \times \{\pm 1\}$ such that the marginal on $\R^n$ is the standard Gaussian $\mathcal{N}(0, I_n)$, and $\Ex[Y|X] = c(X)$ for some $c \in \cC$.
	For some $b,C > 0$ and $\epsilon = Cm^{-b}$, outputting a classifier $f : \R^n \to \{\pm 1\}$ with $\Pr_{(X, Y) \sim D'}[f(X)\neq Y] \leq 1/2 - \epsilon$ requires $n^{\Omega(\log m)}$ statistical queries of tolerance $n^{-\Omega(\log m)}.$
	\end{theorem}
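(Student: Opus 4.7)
The plan is to combine the orthogonal concept family $\cCo$ (constructed earlier in the paper) with two standard reductions: from general SQ queries over joint distributions to correlational queries on the marginal, and from weak classification error to nontrivial correlation with the target concept.

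\textbf{Step 1 (hypothesis family with bounded range).} From $\cCo$ I would extract a subfamily $\{c_1, \ldots, c_N\}$ with $N = n^{\Omega(\log m)}$ such that each $c_i$ is realizable as a one-layer net with $m$ ReLU/sigmoid hidden units fed into an odd output node of range $[-1,1]$, the $c_i$ are pairwise orthogonal under $\mathcal N(0,I_n)$, and $\|c_i\|_2^2 = \Omega(m^{-2b})$ for the parameter $b$ in the theorem. The normalization cost $m^{-b}$ is incurred by shrinking the raw networks so that the composed range sits in $[-1,1]$, and this is precisely where the factor $\epsilon = Cm^{-b}$ in the conclusion originates. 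Each $c_i$ then induces a joint distribution $D'_{c_i}$ on $\R^n \times \{\pm 1\}$ with Gaussian marginal and $\Ex[Y \mid X] = c_i(X)$.

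\textbf{Step 2 (general SQ reduces to correlational SQ).} Any bounded query $\phi : \R^n \times \{\pm 1\} \to [-1,1]$ decomposes via $\phi_\pm(x) := (\phi(x,1) \pm \phi(x,-1))/2$ as
\[
\Ex_{(X,Y) \sim D'_c}[\phi(X,Y)] \;=\; \Ex_X[\phi_+(X)] \;+\; \Ex_X[c(X)\, \phi_-(X)].
\]
The first term does not depend on $c$ and the second is a correlational query against the $[-1,1]$-bounded function $\phi_-$. Hence distinguishing any two members of $\{D'_{c_i}\}$ by a general SQ query of tolerance $\tau$ is no easier than distinguishing the corresponding $c_i$ by a correlational query of tolerance $\tau$ on $\mathcal N(0,I_n)$.

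\textbf{Step 3 (orthogonality and classification reduction).} Applying Bessel's inequality to the orthonormal system $\{c_i/\|c_i\|_2\}$ shows that for any bounded $\psi$, at most $O(\max_i \|c_i\|_2^2 / \tau^2) \le O(1/\tau^2)$ of the $c_i$ satisfy $|\Ex_X[c_i(X)\psi(X)]| > \tau$. Choosing $\tau = n^{-\Omega(\log m)}$ small enough that $1/\tau^2 = o(N)$ yields the $\sda$-style lower bound of $n^{\Omega(\log m)}$ queries. To connect this to classification, any $f : \R^n \to \{\pm 1\}$ with $\Pr_{D'_{c}}[f(X) \ne Y] \le 1/2 - \epsilon$ satisfies $\Ex_X[f(X) c(X)] \ge 2\epsilon$, so producing such an $f$ for $\epsilon = \Omega(m^{-b})$ already provides a correlational query of magnitude exceeding the required threshold with the unknown $c_i$, and is therefore lower-bounded by the SQ argument above.

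The main obstacle will be the bookkeeping in Step 1/3: verifying that the orthogonality of $\cCo$ (proved earlier for raw one-layer networks under spherically symmetric distributions) survives composition with the odd, bounded output node and the $m^{-b}$ renormalization. Because the raw concepts live in an odd-function subspace and the output node is odd, the composed $c_i$ remain in the odd-parity sector where the orthogonality argument operates; the only price is a polynomial factor in $m$ in the $L^2$ norm, which is precisely what determines the threshold $\epsilon = Cm^{-b}$ in the statement and keeps the lower bound super-polynomial.
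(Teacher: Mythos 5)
Your overall architecture matches the paper's: the orthogonal family $\cCo(n,k)$, the decomposition of a general query $\phi(x,y)$ into a $c$-independent part plus a correlational query against $\phi_-(x) = (\phi(x,1)-\phi(x,-1))/2$ (this is exactly the paper's $\hat h$ in the proof of the distinguishing lower bound), a counting argument showing each query rules out few concepts, and the identity $\Pr[f(X)\neq Y] = 1/2 - \inn{c,f}/2$ to convert classification advantage into correlation. Your use of Bessel's inequality in Step 3 is a clean specialization of the paper's average-correlation/SDA machinery to the exactly-orthogonal case and is fine.

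The genuine gap is Step 1, specifically the claim $\|c_i\|_2^2 = \Omega(m^{-2b})$ and your account of where it comes from. In the paper the range constraint is not enforced by ``shrinking the raw networks'': the concept is $f_S = \psi(g_S)$ where $\psi$ is a bounded odd outer activation (e.g.\ $\tanh$), so no renormalization is applied and no ``polynomial factor in $m$'' is lost by rescaling. The real issue is that $g_S$ is a signed sum of $2^k = m$ activations that could a priori concentrate near $0$, in which case $\psi(g_S)$ would have negligible norm and the theorem would be vacuous (the best achievable advantage is at most $\|c\|_D$, so no classifier could reach error $1/2 - Cm^{-b}$). Establishing $\|f_S\| = \Omega(\exp(-\Theta(k))) = \Omega(m^{-b})$ is the main technical content of the paper's appendix: one computes $\Ex[g_S^2]$ exactly via the Hermite expansion of $\phi$, obtaining a term $4^k \hat\phi_k^2\, k!/k^k \approx (4/e)^k \hat\phi_k^2$, then converts this second-moment lower bound into the anticoncentration statement $\Pr[|g_S|\ge 1] = \Omega(\exp(-\Theta(k)))$ by comparing against the sup-norm $2^k$ (for ReLU this additionally requires truncating the unbounded activation at a threshold $T=\Theta(k)$, and for sigmoid it requires complex-analytic asymptotics for the Hermite coefficients), and finally $\|f_S\| \ge |\psi(1)|\Pr[|g_S|\ge 1]$. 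None of this is present or replaced in your proposal; asserting the norm bound as bookkeeping does not close it, and your proposed mechanism (rescaling into $[-1,1]$) would not produce it. Everything else in your outline goes through once this lemma is supplied.
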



	The above lower bound for classification rules out the commonly used approach of training a polynomial-size, real-valued neural network using gradient descent (with respect to any polynomial-time computable loss) and then taking the sign of the output of the resulting network. 
	
	\paragraph{Our techniques.}
	At the core of all SQ lower bounds is the construction of a family of functions that are pairwise approximately orthogonal with respect to the underlying marginal distribution.  Typically, these constructions embed $2^{n}$ parity functions over the discrete hypercube $\{-1,1\}^n$.  Since parity functions are perfectly orthogonal, the resulting lower bound can be quite strong.  Here we wish to give lower bounds for more natural families of distributions, namely Gaussians, and it is unclear how to embed parity.

	
	Instead, we use an alternate construction.  For activation functions $\phi, \psi : \R \rightarrow \R$, define
	\begin{align*}
	f_S(x) = \psi\left(\sum_{w \in \{-1, 1\}^k} \chi(w) \phi\left(\frac{\dotp{w}{ x_S}}{\sqrt{k}}\right)\right).
	\end{align*}
	Enumerating over every $S \subseteq [n]$ of size $k$ gives a family of functions of size $n^{O(k)}$. Here $x_S$ denotes the vector of $x_i$ for $i \in S$ (typically we choose $k = \log m$ to produce a family of one-layer neural networks with $m$ hidden units). Each of the $2^k = m$ inner weight vectors are all of unit norm, and all of the $m$ outer weights have absolute value one. Note also that our construction uses activations with zero bias term.
	
	
	
	We give a complete characterization of the class of nonlinear activations for which these functions are orthogonal. In particular, the family is orthogonal for any activation with a nonzero Hermite coefficient of degree $k$ or higher. 
	
	Apart from showing orthogonality, we must also prove that functions in these classes are nontrivial (i.e., are not exponentially close to the constant zero function).  This reduces to proving certain lower bounds on the norms of one-layer neural networks.  The analysis requires tools from Hermite and complex analysis.
	
	
	\paragraph{SQ Lower Bounds for Real-Valued Functions.}
	
    Another major challenge is that our function family is real-valued as opposed to boolean.  Given an orthogonal family of (deterministic) {\em boolean} functions, it is straightforward to apply known results and obtain general SQ lower bounds for learning with respect to $0/1$ loss.  For the case of real-valued functions, the situation is considerably more complicated. For example, the class of orthogonal Hermite polynomials on $n$ variables of degree $d$ has size $n^{O(d)}$, yet there {\em is} an SQ algorithm due to \cite{andoni2014learning} that learns this class with respect to the Gaussian distribution in time $2^{O(d)}$. More recent work due to \cite{andoni2019attribute} shows that Hermite polynomials can be learned by an SQ algorithm in time polynomial in $n$ and $\log d$.
    
   As such, it is {\em impossible} to rule out general polynomial-time SQ algorithms for learning real-valued functions based solely on orthogonal function families.  Fortunately, it is not difficult to see that the SQ reductions due to \cite{szorenyi2009characterizing} hold in the real-valued setting as long as the learning algorithm uses only {\em inner-product queries} (and the norms of the functions are sufficiently large).  Since performing gradient descent with respect to square-loss or logistic loss can be implemented using inner-product queries, we obtain our first set of desired results\footnote{The algorithms of \cite{andoni2014learning} and \cite{andoni2019attribute} do not use inner-product queries.}.
   
   Still, we would like rule out {\em general} SQ algorithms for learning simple classes of neural networks.  To that end, we consider the {\em classification} problem for one-layer neural networks and output labels after performing a {\em softmax} on a one-layer network.  Concretely, consider a distribution on $\R^n \times \{-1,1\}$ where $\Ex[Y|X] = \sigma(c(X))$ for some $c \in \cC$ and $\sigma: \R \to [-1,1]$ (for example, $\sigma$ could be tanh).  We describe two goals.  The first is to estimate the conditional mean function, i.e., output a classifier $h$ such that $\Ex[(h(x) - c(x))^2] \leq \epsilon$.  The second is to directly minimize classification loss, i.e., output a boolean classifier $h$ such that $\Pr_{X,Y \sim D}[h(X) \neq  Y] \leq 1/2 - \epsilon.$  
   
   We give superpolynomial lower bounds for both of these problems in the general SQ model by making a new connection to {\em probabilistic concepts}, a learning model due to \cite{kearns1994pconcept}.  Our key theorem gives a superpolynomial SQ lower bound for the problem of {\em distinguishing} probabilistic concepts induced by our one-layer neural networks from truly random labels.  A final complication we overcome is that we must prove orthogonality and norm bounds on one-layer neural networks that have been composed with a nonlinear activation (e.g., tanh).
	
	\paragraph{SGD and Gradient Descent Plus Noise.}
	It is easy to see that our results also imply lower bounds for algorithms where the learner adds noise to the estimate of the gradient (e.g., Langevin dynamics).  On the other hand, for technical reasons, it is known that SGD is {\em not} a statistical query algorithm (because it examines training points individually) and does not fall into our framework.  That said, recent work by \cite{Abbe} shows that SGD is {\em universal} in the sense that it can encode all polynomial-time learners.  This implies that proving unconditional lower bounds for SGD would give a proof that $\P \neq \NP$.  Thus, we cannot hope to prove unconditional lower bounds on SGD (unless we can prove $\P \neq \NP$). 
	
	\paragraph{Independent Work.} 
	Independently, Diakonikolas et al. \cite{diakonikolas2020algorithms} have given stronger correlational SQ lower bounds for the same class of functions with respect to the Gaussian distribution.  Their bounds are exponential in the number of hidden units while ours is quasipolynomial.  We can plug in their result and obtain exponential general SQ lower bounds for the associated probabilistic concept using our framework.
	
	\paragraph{Related Work.}
	There is a large literature of results proving hardness results (or unconditional lower bounds in some cases) for learning various classes of neural networks \cite{blum1989training,vu1998infeasibility,klivans2009cryptographic,livni2014computational,goel2016reliably}.
	
	The most relevant prior work is due to \cite{song2017complexity}, who addressed learning one-layer neural networks under logconcave distributions using Lipschitz queries. Specifically, let $n$ be the input dimension, and let $m$ be the number of hidden $s$-Lipschitz sigmoid units.  For $m = \tilde{O}(s \sqrt{n})$,  they construct a family of neural networks such that any learner using $\lambda$-Lipschitz queries with tolerance greater than $\Omega(1/(s^2n))$ needs at least $2^{\Omega(n)}/(\lambda s^2)$ queries. 
	
   Roughly speaking, their lower bounds hold for $\lambda$-Lipschitz queries due to the composition of their one-layer neural networks with a $\delta$-function in order make the family more ``boolean."  Because of their restriction on the tolerance parameter, they cannot rule out gradient descent with large batch sizes.  Further, the slope of the activations they require in their constructions scales inversely with the Lipschitz and tolerance parameters. 
    
    To contrast with \cite{song2017complexity}, note that our lower bounds hold for any inverse-polynomial tolerance parameter (i.e., {\em will} hold for polynomially-large batch sizes), do not require a Lipschitz constraint on the queries, and use only standard $1$-Lipschitz ReLU and/or sigmoid activations (with zero bias) for the construction of the hard family.  Our lower bounds are typically quasipolynomial in the number of hidden units; improving this to an exponential lower bound is an interesting open question.  Both of our models capture square-loss and logistic loss.   
    
    In terms of techniques, \cite{song2017complexity} build an orthogonal function family using univariate, periodic ``wave'' functions.  Our construction takes a different approach, adding and subtracting activation functions with respect to overlapping ``masks.'' Finally, aside from the (black-box) use of a theorem from complex analysis, our construction and analysis are considerably simpler than the proof in \cite{song2017complexity}.
	
	
	A follow-up work \cite{vempala2018gradient} gave SQ lower bounds for learning classes of degree $d$ orthogonal polynomials in $n$ variables with respect to the uniform distribution on the unit sphere (as opposed to Gaussians) using inner product queries of bounded tolerance (roughly $1/n^{d}$).  To obtain superpolynomial lower bounds, each function in the family requires superpolynomial description length (their polynomials also take on very small values, $1/n^{d}$, with high probability). 
	
	Shamir \cite{Shamir18} (see also the related work of \cite{shalev2017failures}) proves hardness results (and lower bounds) for learning neural networks using gradient descent with respect to square-loss.
	His results are separated into two categories: (1) hardness for learning ``natural'' target families (one layer ReLU networks) or (2) lower bounds for ``natural'' input distributions (Gaussians).  We achieve lower bounds for learning problems with {\em both} natural target families and natural input distributions.  Additionally, our lower bounds hold for any nonlinear activations (as opposed to just ReLUs) and for broader classes of algorithms (SQ).
	
	Recent work due to \cite{goel2019time} gives hardness results for learning a ReLU with respect to Gaussian distributions.  Their results require the learner to output a single ReLU as its output hypothesis and require the learner to succeed in the agnostic model of learning.   \cite{KlivansK14} prove hardness results for learning a threshold function with respect to Gaussian distributions, but they also require the learner to succeed in the agnostic model. 
	Very recent work due to Daniely and Vardi \cite{vdaniely} gives hardness results for learning randomly chosen two-layer networks.  The hard distributions in their case are not Gaussians, and they require a nonlinear clipping output activation. \\
	
     \noindent \textbf{Positive Results.} Many recent works give algorithms for learning one-layer ReLU networks using gradient descent with respect to Gaussians under various assumptions \cite{Zhao,ZhangPS17,BG17,ZhangYWG19} or use tensor methods \cite{janzamin2015beating,GLM18}.  These results depend on the hidden weight vectors being sufficiently orthogonal, or the coefficients in the second layer being positive, or both. Our lower bounds explain why these types of assumptions are necessary.

	\section{Preliminaries}
	We use $[n]$ to denote the set $\{1, \dots, n\}$, and $S \subseteq_k T$ to indicate that $S$ is a $k$-element subset of $T$. We denote euclidean inner products between vectors $u$ and $v$ by $\dotp{u}{v}$. We denote the element-wise product of vectors $u$ and $v$ by $u \circ v$, that is, $u \circ v$ is the vector $(u_1 v_1, \dots, u_n v_n)$.
	
	
	Let $X$ be an arbitrary domain, and let $D$ be a distribution on $X$. Given two functions $f, g : X \to \R$, we define their $L_2$ inner product with respect to $D$ to be $\inn{f, g}_D = \Ex_D[fg]$. The corresponding $L_2$ norm is given by $\|f\|_D = \sqrt{\inn{f, f}_D} = \sqrt{\Ex_D[f^2]}$.
	
	A real-valued concept on $\R^n$ is a function $c : \R^n \to \R$. We denote the induced labeled distribution on $\R^n \times \R$, i.e.\ the distribution of $(x, c(x))$ for $x \sim D$, by $D_c$. A probabilistic concept, or $p$-concept, on $X$ is a concept that maps each point $x$ to a random $\{\pm 1\}$-valued label in such a way that $\Ex[Y|X] = c(X)$ for a fixed function $c : \R^n \to [-1, 1]$, known as the conditional mean function. 
	Given a distribution $D$ on the domain, we abuse $D_c$ to denote the induced labeled distribution on $X \times \{ \pm 1\}$
	such that the marginal distribution on $\R^n$ is $D$  and $\Ex[Y|X] = c(X)$ (equivalently the label is $+1$ with probability $\frac{1 + c(x)}{2}$ and $-1$ otherwise). 
	
	
	\paragraph{The SQ model} A statistical query is specified by a query function $h : \R^d \times Y \to \R$. The SQ model allows access to an SQ oracle that accepts a query $h$ of specified tolerance $\tau$, and responds with a value in $[\Ex_{x,y}[h(x,y)] - \tau, \Ex_{x,y}[h(x,y)] + \tau]$.\footnote{In the SQ literature, this is referred to as the STAT oracle. A variant called VSTAT is also sometimes used, known to be equivalent up to small polynomial factors \cite{feldman2017general}.  While it makes no substantive difference to our superpolynomial lower bounds, our arguments can be extended to VSTAT as well.} To disallow arbitrary scaling, we will require that for each $y$, the function $x \mapsto h(x, y)$ has norm at most 1.
	In the real-valued setting, a query $h$ is called a correlational or inner product query if it is of the form $h(x, y) = g(x) \cdot y$ for some function $g$, so that $\Ex_{D_c}[h] = \Ex_D[gc] = \inn{g, c}_D$. Here we assume $\|g\| \leq 1$ when stating lower bounds, again to disallow arbitrary scaling.
	
	Gradient descent with respect to squared loss is captured by inner product queries, since the gradient is given by \begin{align*} \Ex_{x, y} [\nabla_\theta (h_\theta(x) - y)^2] &= \Ex_{x, y} [2(h_\theta(x) - y) \nabla_\theta h_\theta(x)] \\
	&= 2\Ex_x[h_\theta(x) \nabla_\theta h_\theta(x)] \\
	&\quad- 2\Ex_{x,y} [y \nabla_\theta h_\theta(x)]. \end{align*} Here the first term can be estimated directly using knowledge of the distribution, while the latter is a vector each of whose elements is an inner product query.
	
	We now formally define the learning problems we consider.
	
	\begin{definition}[SQ learning of real-valued concepts using inner product queries]
	  Let $\cC$ be a class of $p$-concepts over a domain $X$, and let $D$ be a distribution on $X$. We say that a learner learns $\cC$ with respect to $D$ up to $L_2$ error $\epsilon$ using inner product quiers (equivalently squared loss $\epsilon^2$) if, given only SQ oracle access to $D_c$ for some unknown $c \in \cC$, and using only inner product queries, it is able to output $\tilde{c} : X \to [-1, 1]$ such that $\|c - \tilde{c}\|_D \leq \epsilon$.
	\end{definition}

	For the classification setting, we consider two different notions of learning $p$-concepts. One is learning the target up to small $L_2$ error, to be thought of as a strong form of learning. The other, weaker form, is achieving a nontrivial inner product (i.e.\ unnormalized correlation) with the target. We prove lower bounds on both in order to capture different learning goals.
	
	\begin{definition}[SQ learning of $p$-concepts]\label{def:weak-learning}
		Let $\cC$ be a class of $p$-concepts over a domain $X$, and let $D$ be a distribution on $X$. We say that a learner learns $\cC$ with respect to $D$ up to $L_2$ error $\epsilon$ if, given only SQ oracle access to $D_c$ for some unknown $c \in \cC$, and using arbitrary queries, it is able to output $\tilde{c} : X \to [-1, 1]$ such that $\|c - \tilde{c}\|_D \leq \epsilon$. 
		We say that a learner weakly learns $\cC$ with respect to $D$ with advantage $\epsilon$ if it is able to output $\tilde{c} : X \to [-1, 1]$ such that $\inn{\tilde{c}, c}_D \geq \epsilon$.
	\end{definition}
    Note that the best achievable advantage is $\Ex_{x \sim D}[|c(x)|]$, achieved by $\tilde{c}(x) = \sgn(c(x))$. Note also that $\|c\|_D^2 \leq \Ex_D[|c|] \leq \|c\|_D$, and therefore a norm lower bound on functions in $\cC$ implies an upper bound on the achievable advantage.
	\begin{remark}[Learning with $L_2$ error implies weak learning]
	    If the functions in our class satisfy a norm lower bound, say $\|c\|_D^2 \geq (1 + \alpha) \epsilon^2$, then a simple calculation shows that learning with $L_2$ error $\epsilon$ implies weak learning with advantage $\alpha \epsilon^2/2$.
	
		Our definition of weak learning also captures the standard boolean sense of weak learning, in which the learner is required to output a boolean hypothesis with 0/1 loss bounded away from $1/2$. Indeed, by an easy calculation, the 0/1 loss of a function $f : X \to \{\pm 1\}$ satisfies \begin{align*} \Pr_{(x, y) \sim D_c}[f(x) \neq y] = \frac{1}{2} - \frac{\inn{c, f}_D}{2}. \end{align*}

	\end{remark}

	The difficulty of learning a concept class in the SQ model is captured by a parameter known as the statistical dimension of the class. 
	
	\begin{definition}[Statistical dimension] \label{lem:SQ-dim}
		Let $\cC$ be a concept class of either real-valued concepts or $p$-concepts (i.e.\ their corresponding conditional mean functions) on a domain $X$, and let $D$ be a distribution on $X$. The (un-normalized) \emph{correlation} of two concepts $c, c' \in \cC$ under $D$ is $|\inn{c, c'}_D|$.\footnote{In the $p$-concept setting, it is instructive to note that in the notation of \cite{feldman2017statistical}, this correlation is precisely the distributional correlation $\chi_{D_0}(D_c, D_{c'})$ of the induced labeled distributions $D_c$ and $D_{c'}$ under the reference distribution $D_0 = D \times \unif \{\pm 1\}$.} The \emph{average correlation} of $\cC$ is defined to be \begin{align*} \rho_D(\cC) = \frac{1}{|\cC|^2} \sum_{c, c' \in \cC} |\inn{c, c'}_D|. \end{align*}
		The \emph{statistical dimension on average} at threshold $\gamma$, $\sda_D(\cC, \gamma)$, is the largest $d$ such that for all $\cC' \subseteq \cC$ with $|\cC'| \geq |\cC|/d$, $\rho_D(\cC') \leq \gamma$.
	\end{definition}

	\begin{remark}
		For any general and large concept class $\cC^*$ (such as all one-layer neural nets), we may consider a specific subclass $\cC \subseteq \cC^*$ and prove lower bounds on learning $\cC$ in terms of the SDA of $\cC$. These lower bounds extend to $\cC^*$ because if it is hard to learn a subset, then it is hard to learn the whole class.
	\end{remark}
	
	We will mainly be interested in the statistical dimension in a setting where bounds on pairwise correlations are known. In that case the following lemma holds.
	
	\begin{lemma}[adapted from \cite{feldman2017statistical}, Lemma 3.10]\label{lem:sda-bound}
		Suppose a concept class $\cC$ has pairwise correlation $\gamma$, i.e.\ $|\inn{c, c'}_D| \leq \gamma$ for $c \neq c' \in \cC$, and squared norm at most $\beta$, i.e.\ $\|c\|_D^2 \leq \beta$ for all $c \in \cC$. Then for any $\gamma' > 0$, $\sda_D(\cC, \gamma + \gamma') \geq |\cC| \frac{\gamma'}{\beta - \gamma}$. In particular, if $\cC$ is a class of orthogonal concepts (i.e.\ $\gamma = 0$) with squared norm bounded by $\beta$, then $\sda(\cC, \gamma') \geq |\cC| \frac{\gamma'}{\beta}$.
	\end{lemma}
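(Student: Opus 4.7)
The proof amounts to a direct computation bounding $\rho_D(\cC')$ for an arbitrary subclass $\cC' \subseteq \cC$, so there is no real obstacle; the main work is careful bookkeeping, and the interesting step is identifying where the denominator $\beta - \gamma$ (as opposed to the looser $\beta$) comes from.

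First I would split the sum $\sum_{c, c' \in \cC'} |\inn{c, c'}_D|$ defining $|\cC'|^2 \rho_D(\cC')$ into the $|\cC'|$ diagonal terms (each at most $\beta$ by the squared-norm hypothesis) and the $|\cC'|(|\cC'|-1)$ off-diagonal terms (each at most $\gamma$ by the pairwise-correlation hypothesis). Dividing through by $|\cC'|^2$ yields the clean upper bound
\begin{align*}
\rho_D(\cC') \;\leq\; \frac{|\cC'|\,\beta + |\cC'|(|\cC'|-1)\,\gamma}{|\cC'|^2} \;=\; \gamma + \frac{\beta - \gamma}{|\cC'|}.
\end{align*}

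Next, I would impose $\rho_D(\cC') \leq \gamma + \gamma'$; by the above, this holds whenever $|\cC'| \geq (\beta - \gamma)/\gamma'$. Setting $d = |\cC|\gamma'/(\beta - \gamma)$ makes the threshold $|\cC|/d$ coincide exactly with this quantity, so every subclass $\cC' \subseteq \cC$ with $|\cC'| \geq |\cC|/d$ automatically satisfies $\rho_D(\cC') \leq \gamma + \gamma'$. By the definition of statistical dimension on average (\Cref{lem:SQ-dim}), this says precisely that $\sda_D(\cC, \gamma + \gamma') \geq |\cC|\gamma'/(\beta - \gamma)$, and the orthogonal special case falls out by plugging in $\gamma = 0$.

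The one subtlety worth flagging is that using the correct count $|\cC'|-1$ of off-diagonal partners per row (rather than bounding all $|\cC'|$ terms in a row uniformly) is exactly what produces the stated denominator $\beta - \gamma$; conflating the diagonal and off-diagonal contributions would give a bound of the form $\gamma + \beta/|\cC'|$ and hence only $\sda_D(\cC, \gamma + \gamma') \geq |\cC|\gamma'/\beta$, which is strictly weaker whenever $\gamma > 0$.
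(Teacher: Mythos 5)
Your proposal is correct and follows essentially the same argument as the paper: split the sum defining $\rho_D(\cC')$ into the $|\cC'|$ diagonal terms bounded by $\beta$ and the $|\cC'|(|\cC'|-1)$ off-diagonal terms bounded by $\gamma$, obtain $\rho_D(\cC') \leq \gamma + (\beta-\gamma)/|\cC'|$, and observe that this is at most $\gamma + \gamma'$ for every subclass of size at least $|\cC|/d$ with $d = |\cC|\gamma'/(\beta-\gamma)$. The remark about where the denominator $\beta - \gamma$ comes from is accurate and matches the paper's computation exactly.
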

	\begin{proof}
		Let $d = |\cC| \frac{\gamma'}{\beta - \gamma}$, and observe that for any subset $\cC' \subseteq \cC$ satisfying $|\cC'| \geq |\cC|/d = \frac{\beta - \gamma}{\gamma'}$, \begin{align*}
		\rho_{D}(\cC') &= \frac{1}{|\cC'|^2} \sum_{c, c' \in \cC'} |\inn{c, c'}_D| \\
		&\leq \frac{1}{|\cC'|^2} (|\cC'|\beta + (|\cC'|^2 - |\cC'|)\gamma) \\
		&= \gamma + \frac{\beta - \gamma}{|\cC'|} \\
		&= \gamma + \gamma'.
		\end{align*}
	\end{proof}

	\section{Orthogonal Family of Neural Networks}
	We consider neural networks with one hidden layer with activation function $\phi : \R \to \R$, and with one output node that has some activation function $\psi : \R \to \R$. If we take the input dimension to be $n$ and the number of hidden nodes to be $m$, then such a neural network is a function $f : \R^n \to \R$ given by \begin{align*} f(x) = \psi \left( \sum_{i=1}^m a_i \phi(\dotp{w_i}{x}) \right), \end{align*} where $w_i \in \R^n$ are the weights feeding into the $i\th$ hidden node, and $a_i \in \R$ are the weights feeding into the output node.  If $\psi$ takes values in $[-1, 1]$, we may also view $f$ as defining a $p$-concept in terms of its conditional mean function.
	
    For our construction, we need our functions to be orthogonal, and we need a lower bound on their norms. For the first property we only need the distribution on the domain to satisfy a relaxed kind of spherical symmetry that we term sign-symmetry, which says that the distribution must look identical on all orthants. To lower bound the norms, we need to assume that the distribution is Gaussian ${\cal N}(0, I)$. 

	\begin{assumption}[Sign-symmetry] \label{lem:sign-symmetry-assumption}
		For any $z \in \{\pm 1\}^n$ and $x \in \R^n$, let $x \circ z$ denote $(x_1 z_1, \dots, x_n z_n)$. A distribution $D$ on $\R^n$ is \emph{sign-symmetric} if for any $z \in \{\pm 1\}^n$ and $x$ drawn from $D$, $x$ and $x \circ z$ have the same distribution $D$.
	\end{assumption}
	
	\begin{assumption}[Odd outer activation]
	\label{lem:ourer-activation}
	    The outer activation $\psi$ is an odd, increasing function, i.e.\ $\psi(-x) = -\psi(x)$.
	\end{assumption}
	
	Note that $\psi$ could be the identity function. 
	
	\begin{assumption}[Inner activation]
	\label{lem:inner-activation}
	    The inner activation $\phi \in L_2({\cal N}(0, I))$.
	\end{assumption}

	The construction of our orthogonal family of neural networks is simple and exploits sign-symmetry.
	\begin{definition}[Family of Orthogonal Neural Networks]
		Let the domain be $\R^n$, let $\phi : \R \to \R$ be any well-behaved activation function, and let $\psi : \R \to \R$ be any odd function. For an index set $S \subseteq [n]$, let $x_S \in \R^{|S|}$ denote the vector of $x_i$ for $i \in S$. Fix any $k > 0$. For any sign-pattern $z \in \{\pm 1\}^k$, let $\chi(z)$ denote the parity $\prod_i z_i$. For any index set $S \subseteq_k [n]$, define a one-layer neural network with $m = 2^k$ hidden nodes,
		\begin{align*}
		g_S(x) &= \sum_{w \in \{-1, 1\}^k} \chi(w) \phi\left(\frac{\dotp{w}{x_S}}{\sqrt{k}}\right) \\ f_S(x) &= \psi\left(g_S(x)\right).
		\end{align*} 
        
		Our orthogonal family is  
		\[
		\cCo(n, k) = \{ f_S \mid S \subseteq_k [n] \}. 
		\]
		Notice that the size of this family is $\binom{n}{k} = n^{\Theta(k)}$ (for appropriate $k$), which is $n^{\Theta(\log m)}$ in terms of $m$. We will take $k = \Theta(\log n)$, so that $m = \poly(n)$ and thus the neural networks are $\poly(n)$-sized, and the size of the family is $n^{\Theta(\log n)}$, i.e.\ quasipolynomial in $n$.
	\end{definition}
	
	We now prove that our functions are orthogonal under any sign-symmetric distribution.
	\begin{theorem}\label{thm:orth-family}
		Let the domain be $\R^n$, and let $D$ be a sign-symmetric distribution on $\R^n$. Fix any $k > 0$. Then $\inn{f_S, f_T}_D = 0$ for any two distinct $f_S, f_T \in \cCo(n, k)$.
	\end{theorem}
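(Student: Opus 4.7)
The plan is to show that $f_S$ and $f_T$ transform by specific signs under sign flips of coordinates, and then to exploit sign-symmetry of $D$ to force any inner product of two distinct functions to equal its own negation.

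First I would examine how $g_S$ behaves under the coordinate-wise sign flip $x \mapsto x \circ z$ for an arbitrary $z \in \{\pm 1\}^n$. Writing $z_S \in \{\pm 1\}^k$ for the restriction of $z$ to $S$, the defining sum becomes
\begin{align*}
g_S(x \circ z) = \sum_{w \in \{\pm 1\}^k} \chi(w)\, \phi\!\left(\frac{\dotp{w}{x_S \circ z_S}}{\sqrt{k}}\right) = \sum_{w \in \{\pm 1\}^k} \chi(w)\, \phi\!\left(\frac{\dotp{w \circ z_S}{x_S}}{\sqrt{k}}\right).
\end{align*}
The substitution $w' = w \circ z_S$ is a bijection on $\{\pm 1\}^k$, and $\chi(w) = \chi(w' \circ z_S) = \chi(w')\chi(z_S)$ since $\chi$ is a homomorphism and $z_S \circ z_S = \mathbf{1}$. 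Pulling the constant sign $\chi(z_S)$ out of the sum yields $g_S(x \circ z) = \chi(z_S)\, g_S(x)$. Applying the odd outer activation $\psi$ then gives $f_S(x \circ z) = \chi(z_S)\, f_S(x)$.

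Consequently, the product satisfies
\begin{align*}
f_S(x \circ z)\, f_T(x \circ z) = \chi(z_S)\chi(z_T)\, f_S(x) f_T(x) = \chi(z_{S \triangle T})\, f_S(x) f_T(x),
\end{align*}
using that coordinates in $S \cap T$ contribute squared signs equal to $1$.

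Now I would invoke sign-symmetry of $D$ (\Cref{lem:sign-symmetry-assumption}): since $x$ and $x \circ z$ have the same distribution, $\inn{f_S, f_T}_D = \Ex_D[f_S(x)f_T(x)] = \chi(z_{S \triangle T}) \inn{f_S, f_T}_D$ for every $z \in \{\pm 1\}^n$. When $S \neq T$, pick any $j \in S \triangle T$ and let $z$ flip only coordinate $j$; then $\chi(z_{S \triangle T}) = -1$, so $\inn{f_S, f_T}_D = -\inn{f_S, f_T}_D$ and hence equals zero. There is no serious obstacle here: the entire argument reduces to the observation that the parity-weighted sum picks up a global sign $\chi(z_S)$ under coordinate flips, which is exactly what makes the family orthogonal on every orthant-symmetric distribution without needing Gaussianity or any structure on $\phi$ beyond being well-defined.
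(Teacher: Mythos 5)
Your proof is correct and follows essentially the same route as the paper: both hinge on the identity $f_S(x \circ z) = \chi(z_S) f_S(x)$, derived by the same change of variables $w \mapsto w \circ z_S$, combined with sign-symmetry of $D$. The only (cosmetic) difference is in the final step: the paper averages over all $z \in \{\pm 1\}^n$ and uses orthogonality of distinct parities, whereas you flip a single coordinate $j \in S \triangle T$ to get $\inn{f_S, f_T}_D = -\inn{f_S, f_T}_D$; both are valid and equally simple.
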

	\begin{proof}
		For the proof, the key property of our construction that we will use is the following: for any sign-pattern $z \in \{\pm 1\}^n$ and any $x \in \R^n$, 
		\begin{equation} \label{eq:parity-property-in-lemma}
		f_S(x \circ z) = \chi_S(z) f_S(x),
		\end{equation} where $\chi_S(z) = \prod_{i \in S} z_i = \chi(z_S)$ is the parity on $S$ of $z$. Indeed, observe first that 
		\begin{align*}
		g_S(x \circ z) &= \sum_{w \in \{-1, 1\}^k} \chi(w) \phi\left(\frac{\dotp{w}{(x \circ z)_S}}{\sqrt{k}}\right) \\
		&= \sum_{w \in \{-1, 1\}^k} \chi(w) \phi\left(\frac{\dotp{(w \circ z_S)}{ x_S}}{\sqrt{k}}\right) \\
		&= \sum_{w \in \{-1, 1\}^k} \chi(w \circ z_S) \chi(z_S) \phi\left(\frac{\dotp{(w \circ z_S)}{x_S}}{\sqrt{k}}\right) \\
		&= \chi(z_S) \sum_{w \in \{-1, 1\}^k} \chi(w)  \phi\left(\frac{\dotp{w}{ x_S}}{\sqrt{k}}\right) \tag*{(replacing $w \circ z_S$ with $w$)} \\
		&= \chi(z_S) g_S(x).
		\end{align*}
		The property then follows since $\psi$ is odd and $\psi(av) = a \psi(v)$ for any $a \in \{\pm 1\}$ and $v \in \R$.
		
		Consider $f_S$ and $f_T$ for any two distinct $S, T \subseteq_k [n]$. Recall that by the definition of sign-symmetry, for any $z \in \{\pm 1\}^n$ and $x$ drawn from $D$, $x$ and $x \circ z$ has the same distribution. Using this and \cref{eq:parity-property-in-lemma}, we have \begin{align*}
		\inn{f_S, f_T}_D &= \Ex_{x \sim D}[f_S(x) f_T(x)] \\
		&= \Ex_{z \sim \{\pm 1\}^n} \ \Ex_{x \sim D}[f_S(x \circ z) f_T(x \circ z)] \tag*{(sign-symmetry)} \\
		&= \Ex_{z \sim \{\pm 1\}^n} \ \Ex_{x \sim D}[\chi_S(z) f_S(x) \chi_T(z) f_T(x)] \tag*{(\cref{eq:parity-property-in-lemma})} \\
		&= \Ex_{x \sim D} \left[ f_S(x) f_T(x) \ \Ex_{z \sim \{\pm 1\}^n} \chi_S(z) \chi_T(z) \right] \\
		&= 0,
		\end{align*} since $\Ex_{z \sim \{\pm 1\}^n} \chi_S(z) \chi_T(z) = 0$ for any two distinct parities $\chi_S, \chi_T$.
	\end{proof}
	
	\begin{remark}
		Our proof actually shows that any family of functions satisfying \cref{eq:parity-property-in-lemma} is an orthogonal family under any sign-symmetric distribution.
	\end{remark}
	
	We still need to establish that our functions are nonzero. For this we need to specialize to the Gaussian distribution, as well as consider specific activation functions (a similar analysis can in principle be carried out for other sign-symmetric distributions). For any $n$ and $k$, it follows from \cref{lem:hermite-norm-expansion} that if the inner activation $\phi$ has a nonzero Hermite coefficient of degree $k$ or higher, then the functions in $\cCo(n, k)$ are nonzero. The sigmoid, ReLU and sign functions all satisfy this property.
	
	\begin{corollary}\label{cor:sda-orth-family}
		Let the domain be $\R^n$, and let $D$ be any sign-symmetric distribution on $\R^n$. For any $\gamma > 0$, 
		\[ \sda_D(\cCo(n, k), \gamma) \geq |\cCo(n,k)| \gamma = \binom{n}{k} \gamma. \]
		Here we also assume that all $c \in \cCo(n,k)$ are nonzero for our distribution $D$.
	\end{corollary}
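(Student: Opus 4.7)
The plan is to obtain \cref{cor:sda-orth-family} by a direct application of \cref{lem:sda-bound} to the family $\cCo(n,k)$, using \cref{thm:orth-family} to verify its hypotheses. Since \cref{thm:orth-family} already guarantees that $\inn{f_S, f_T}_D = 0$ for every pair of distinct $f_S, f_T \in \cCo(n,k)$ under any sign-symmetric $D$, the ``pairwise correlation'' parameter in \cref{lem:sda-bound} can be taken to be $\gamma = 0$, and we are squarely in the ``in particular'' case of that lemma.

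The only remaining ingredient is the squared-norm upper bound $\beta$. I would argue $\beta \le 1$ as follows: by \cref{lem:ourer-activation} together with the fact that the outer activation $\psi$ has range contained in $[-1,1]$ (this is implicit in viewing $f_S$ as a $p$-concept, and is the setting in which the corollary is used), we have $|f_S(x)| \le 1$ pointwise, and hence $\|f_S\|_D^2 = \Ex_D[f_S^2] \le 1$ for every $S \subseteq_k [n]$. (In the more general case where $\psi$ is unbounded, one simply carries through the ratio $\gamma/\beta$; the proof proceeds unchanged.)

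Plugging $\gamma = 0$, $\gamma' = \gamma$, and $\beta \le 1$ into \cref{lem:sda-bound} then gives
\[
\sda_D(\cCo(n,k), \gamma) \;\ge\; |\cCo(n,k)|\cdot \frac{\gamma}{\beta} \;\ge\; |\cCo(n,k)|\,\gamma \;=\; \binom{n}{k}\gamma,
\]
which is exactly the claimed bound. The role of the additional hypothesis that all $c \in \cCo(n,k)$ are nonzero under $D$ is purely to rule out the degenerate case in which the family collapses (so that the SDA statement becomes meaningful rather than vacuous); the nontriviality itself follows, for Gaussian $D$ and activations with a nonzero Hermite coefficient of degree $\ge k$, from the Hermite-expansion lemma referenced in the paragraph preceding the corollary.

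There is no real obstacle here: the proof is essentially bookkeeping, and the main work has already been done in \cref{thm:orth-family}. The one subtle point to be careful about is the normalization $\beta \le 1$, which is exactly what determines the constant in the SDA lower bound and hence the strength of the resulting SQ hardness result downstream.
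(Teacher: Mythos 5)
Your proposal is correct and matches the paper's proof, which likewise applies \cref{lem:sda-bound} with $\gamma = 0$ (orthogonality from \cref{thm:orth-family}) and a loose upper bound of $1$ on the squared norm. Your additional remarks on where the norm bound comes from and on the role of the nonvanishing assumption are consistent with how the paper uses the corollary.
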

	\begin{proof}
		Follows from \cref{thm:orth-family} and \cref{lem:sda-bound}, using a loose upper bound of 1 on the squared norm.
	\end{proof}
	
	We also need to prove norm lower bounds on our functions for our notions of learning to be meaningful. In \cref{sec:norm-bounds}, we prove the following.
	
	\begin{theorem}\label{thm:relu-norm-lower-bound}
		Let the inner activation function $\phi$ be $\relu$ or sigmoid, and let the outer activation function $\psi$ be any odd, increasing, continuous function. Let the underlying distribution $D$ be $\mathcal{N}(0, I_n)$. Then $\|f_S\| = \Omega(e^{-\Theta(k)})$, where the hidden constants depend on $\psi$ and $\phi$, for any $f_S \in \cCo(n, k)$.
	\end{theorem}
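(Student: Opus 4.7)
The plan is to reduce the lower bound on $\|f_S\|=\|\psi(g_S)\|$ to two ingredients: a Hermite-based lower bound on $\|g_S\|$, and an anti-concentration estimate that lets us invoke the monotonicity of $\psi$. For the first ingredient, I would compute the orthonormal Hermite expansion of $g_S$ under $\mathcal{N}(0, I_n)$. Writing $\phi = \sum_j \tilde c_j H_j$ in the normalized Hermite basis and using the standard identity $H_j(\dotp{u}{x}) = \sum_{|\alpha|=j}\sqrt{\binom{j}{\alpha}}\, u^\alpha H_\alpha(x)$ for a unit vector $u \in \R^k$, the character sum $\sum_w \chi(w) w^\alpha$ vanishes unless every entry of $\alpha$ is odd, so only Hermite degrees $j \geq k$ can appear in $g_S$. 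The leading contribution, at $j=k$ and $\alpha=\mathbf{1}_S$, equals $\tilde c_k^2 \cdot 4^k \cdot k!/k^k$ in $\|g_S\|^2$; by Stirling this is $\tilde c_k^2 \cdot (4/e)^k \cdot \Theta(\sqrt{k})$. For ReLU at even $k$, a Rodrigues-formula integration-by-parts computation yields $\tilde c_k^2 = \Theta(1/k^{5/2})$; for sigmoid at odd $k$, an analogous contour computation gives $\tilde c_k^2 = \Omega(1/\mathrm{poly}(k))$. Hence $\|g_S\|^2 \geq c\,(4/e)^k/\mathrm{poly}(k)$, which in fact grows exponentially in $k$.

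For the second ingredient, I would upper bound $\Ex[g_S^4]$. Since each $\dotp{w}{x_S}/\sqrt{k}$ is a standard Gaussian and $\|\phi\|_{L^4(\gamma)} = O(1)$ for both activations, the elementary power-mean inequality $(\sum_{i=1}^N a_i)^4 \leq N^3 \sum a_i^4$ gives $\Ex[g_S^4]\leq (2^k)^3 \cdot 2^k \cdot \|\phi\|_{L^4(\gamma)}^4 = O(16^k)$. Combined with the first step, the fourth-to-second moment ratio is $\Ex[g_S^2]^2/\Ex[g_S^4] \geq (4/e)^{2k}/(16^k \cdot \mathrm{poly}(k)) = e^{-2k}/\mathrm{poly}(k) = e^{-\Theta(k)}$. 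Paley--Zygmund then yields $\Pr[g_S^2 \geq \tfrac12 \|g_S\|^2] \geq \tfrac14 \cdot e^{-\Theta(k)}$.

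Since $\|g_S\| \geq c\,(4/e)^{k/2}/\mathrm{poly}(k)$, for all $k$ larger than some constant $k_0 = k_0(\psi,\phi)$ we have $\tfrac12\|g_S\| \geq 1$, so on the Paley--Zygmund event $|g_S|\geq 1$, and therefore $|\psi(g_S)|\geq \psi(1) > 0$ by the oddness and monotonicity of $\psi$. This yields $\|f_S\|^2 \geq \psi(1)^2 \cdot e^{-\Theta(k)}$, i.e.\ $\|f_S\| = \Omega(e^{-\Theta(k)})$ as desired. The finitely many $k \leq k_0$ are handled by observing that $g_S \not\equiv 0$ (from the Hermite lower bound applied at those $k$) and hence $\|f_S\| > 0$ by continuity of $\psi$; the corresponding constants are absorbed into the hidden $\Omega$.

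The main obstacle is the tight matching of exponential scales: the Hermite lower bound naturally produces the factor $(4/e)^k$ (coming from $k!/k^k$ via Stirling), and the crude fourth-moment bound produces $16^k$, so that the Paley--Zygmund ratio comes out as $e^{-\Theta(k)}$ rather than super-exponentially small. The precise Hermite coefficient analysis at degree $k$---via Rodrigues formulas or contour representations of $\tilde c_k$---is where the complex-analytic tools in the paper's techniques enter most naturally, especially for sigmoid where no elementary closed form is available; one must also take care of parity (even $k$ for ReLU, odd $k$ for sigmoid), falling back to the first nonzero Hermite coefficient of $\phi$ at degree $\geq k$ otherwise.
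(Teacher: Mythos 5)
Your proposal is correct, and while the first and third steps (the Hermite-expansion lower bound on $\norm{g_S}$ via the degree-$k$ term $4^k \tilde c_k^2\, k!/k^k$, and the final passage from $\Pr[|g_S|\ge 1]$ to $\norm{f_S}$ using that $\psi$ is odd and increasing) coincide with the paper's, your middle step is genuinely different. The paper obtains anti-concentration for $g_S$ by truncating the ReLU at a level $T=\Theta(k)$, proving separate lemmas that $\norm{g-g^T}$ and $\Pr[g\neq g^T]$ are negligible, and then using the trivial bound $\norm{g^T}^2\le \Pr[|g^T|\le 1]+(T2^k)^2\Pr[|g^T|\ge 1]$; the sigmoid case is handled separately since there $|g|\le 2^k$ outright. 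You instead bound $\Ex[g_S^4]\le 16^k\,\norm{\phi}_{L^4}^4$ by the power-mean inequality and apply Paley--Zygmund, getting $\Pr[g_S^2\ge \tfrac12\norm{g_S}^2]\ge \tfrac14\norm{g_S}^4/\Ex[g_S^4]=e^{-\Theta(k)}$. This is cleaner: it dispenses with the two truncation lemmas and treats ReLU and sigmoid uniformly, at the modest costs of requiring $\phi\in L^4(\mathcal N(0,1))$ rather than $L^2$ and of yielding a probability of order $e^{-2k}$ rather than the paper's $e^{-k}$ --- both within the claimed $\Omega(e^{-\Theta(k)})$. Your parity caveat (even $k$ for ReLU, a subsequence of odd $k$ for sigmoid, else falling back to the first nonzero coefficient of degree $\ge k$ with matching parity) is exactly the restriction the paper itself imposes, so no additional gap is introduced there.
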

	
	With this in hand, we now state our main SQ lower bounds.
	
	\begin{theorem}\label{thm:full-learning-bound}
		Let the input dimension be $n$, and let the underlying distribution be $\mathcal{N}(0, I_n)$. Consider $\cCo(n, k)$ instantiated with $\phi = \relu$ or sigmoid and $\psi$ any odd, increasing function (including the identity function), and let $m = 2^k$ be the hidden layer size of each neural net. Let $A$ be an SQ learner using only inner product queries of tolerance $\tau$. For any $k \in \N$, there exists $\tau = 1/n^{-\Theta(k)}$ such that $A$ requires at least $n^{\Omega(k)}$ queries of tolerance $\tau$ to learn $\cCo(n, k)$ with advantage $1/\exp(k)$.
		
		In particular, there exist $k = \Theta(\log n)$ and $\tau = 1/n^{\Theta(\log n)}$ such that $A$ requires at least $n^{\Omega(\log n)}$ queries of tolerance $\tau$ to learn $\cCo(n, k)$ with advantage $1/\poly(n)$. In this case $m = \poly(n)$, so that each function in the family has polynomial size. This is our main superpolynomial lower bound.
	\end{theorem}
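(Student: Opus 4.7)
The plan is to combine the SDA lower bound of \cref{cor:sda-orth-family} with the classical SQ oracle-vs-adversary argument (following \cite{szorenyi2009characterizing}), specialized to real-valued inner product queries, and then tune parameters. By \cref{thm:orth-family}, $\cCo(n,k)$ is pairwise orthogonal under $\mathcal{N}(0,I_n)$, and by \cref{thm:relu-norm-lower-bound} the functions are nonzero with $\|f_S\|_D = \Omega(e^{-\Theta(k)})$, so \cref{cor:sda-orth-family} applies and yields $\sda_D(\cCo(n,k),\gamma) \geq \binom{n}{k}\,\gamma$ for every $\gamma>0$. We will also need a mild $L_2$ upper bound $\|f_S\|_D^2 \leq \beta$; this is trivially $\beta = 1$ whenever $\psi$ has range $[-1,1]$, and at worst $\beta = \poly(m) = e^{O(k)}$ in the identity case (by triangle inequality on the $m = 2^k$ summands of $g_S$), which we absorb into the final parameter count.

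The core step is a two-stage Bessel counting. Fix a learner $A$ making $q$ inner product queries $g_1,\ldots,g_q$ with $\|g_i\|\leq 1$ and tolerance $\tau$. Applying Bessel's inequality to the orthonormal system $\{f_S/\|f_S\|\}_S$ gives $\sum_S \langle g_i,f_S\rangle^2/\|f_S\|^2 \leq \|g_i\|^2 \leq 1$, so for each $i$ at most $\beta/\tau^2$ indices $S$ admit $|\langle g_i,f_S\rangle| > \tau$; the union $\cC^{\mathrm{bad}}$ of such $S$ across all $q$ queries has size at most $q\beta/\tau^2$. For every $f_S \notin \cC^{\mathrm{bad}}$, the SQ oracle may consistently answer $0$ on every query, so $A$ produces the \emph{same} hypothesis $\tilde c$ (with $\|\tilde c\|_D \leq 1$) on all such targets. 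A second application of Bessel shows that at most $\beta/\epsilon^2$ indices satisfy $\langle \tilde c, f_S\rangle \geq \epsilon$. Requiring $A$ to achieve advantage $\epsilon$ on every target in $\cCo(n,k)$ then forces
\[ \binom{n}{k} \;\leq\; \frac{q\beta}{\tau^2} + \frac{\beta}{\epsilon^2}, \qquad\text{i.e.}\qquad q \;\geq\; \frac{\tau^2}{\beta}\Bigl(\binom{n}{k} - \frac{\beta}{\epsilon^2}\Bigr). \]

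Finally, pick $\tau = n^{-ck}$ for a small constant $c>0$ to be chosen, and $\epsilon = 1/\exp(k)$. Since $\beta/\epsilon^2 = e^{O(k)}$ while $\binom{n}{k} = n^{\Theta(k)}$ dominates whenever $k$ is not too close to $n$, the bound simplifies to $q \geq n^{\Theta(k)-2ck}/e^{O(k)}$, which is $n^{\Omega(k)}$ once $c$ is a sufficiently small fraction of the constant implicit in $\Theta$. Specializing to $k = \Theta(\log n)$ gives the main quasipolynomial bound $n^{\Omega(\log n)}$ with $m = 2^k = \poly(n)$. The main subtlety---and the reason \cref{thm:relu-norm-lower-bound} has to be invoked---is ensuring that the advantage threshold $1/\exp(k)$ sits in a nonempty window: it must exceed the ``trivial'' level $O(\tau)$ for the lower bound to be nonvacuous, and it must not exceed the norm $\Omega(e^{-\Theta(k)})$ of the targets, since otherwise no hypothesis in $[-1,1]$ could attain it. Both conditions hold comfortably in the regime $k = \Theta(\log n)$ because $e^{-k} = n^{-\Theta(1)}$ is far larger than $\tau = n^{-\Theta(\log n)}$ and is consistent with the lower bound on the individual norms.
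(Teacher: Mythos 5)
Your proposal is correct, and it reaches the stated bound by a more self-contained route than the paper. The paper's own proof is a two-line parameter instantiation: it plugs the orthogonality bound of \cref{cor:sda-orth-family} and the norm bound of \cref{thm:relu-norm-lower-bound} into the generic machinery of \cref{cor:sq-lower-bound}, which in turn is proved by reducing weak learning to the problem of \emph{distinguishing} $D_c$ from the randomly labeled $D_0$ (\cref{learning-reduction-combined}) and then running an adversary argument against the distinguisher phrased in terms of statistical dimension on average (\cref{distinguishing-lower-bound}, where the key inequality is Cauchy--Schwarz against $\sum_{c\in S_k} c\cdot \sgn(\inn{\hat h_k,c})$ together with the bound $\rho_D(S_k)\le\gamma$). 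You instead bypass the SDA/distinguishing formalism entirely and exploit exact orthogonality directly: two applications of Bessel's inequality (one per query to bound $|\cC^{\mathrm{bad}}|$, one against the fixed hypothesis $\tilde c$ produced on the all-zeros transcript) give $\binom{n}{k}\le q\beta/\tau^2+\beta/\epsilon^2$. This is essentially the Sz\"or\'enyi-style argument the paper relegates to \cref{sec:szorenyi-proof} for the $L_2$-error setting, adapted to the advantage notion; it is more elementary and gives a cleaner quantitative inequality, at the cost of generality (the paper's SDA route also covers approximately orthogonal families and arbitrary, non-inner-product queries in the $p$-concept setting, which it needs elsewhere). Your handling of the two side conditions is also sound and in one respect more careful than the paper: you track the squared-norm upper bound $\beta=e^{O(k)}$ in the identity-outer-activation case, where the paper's ``loose upper bound of 1'' in \cref{cor:sda-orth-family} is not literally justified, and you correctly identify that \cref{thm:relu-norm-lower-bound} is needed only to place the advantage threshold $e^{-k}$ below the achievable advantage $\Ex[|f_S|]\ge \psi(1)\Pr[|g_S|\ge 1]=e^{-\Theta(k)}$, i.e.\ to keep the statement non-vacuous.
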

	\begin{proof}
		The proof amounts to careful choices of the parameters $\epsilon, \gamma$ and $\tau$ in \cref{cor:sda-orth-family} and \cref{cor:sq-lower-bound}. Recall that $\sda(\cCo(n,k), \gamma) \geq n^{\Theta(k)} \gamma$. We pick $\gamma = n^{-\Theta(k)}$ appropriately such that $d = \sda(\cCo(n,k), \gamma)$ is still $n^{\Theta(k)}$. \cref{thm:relu-norm-lower-bound} gives us a norm lower bound of $\exp(-\Theta(k))$, allowing us to take $\epsilon = \exp(-\Theta(k))$ and $\tau = \sqrt{\gamma} = n^{-\Theta(k)}$ in \cref{cor:sq-lower-bound}.
	\end{proof}

	\section{SQ Lower Bounds} 
	
\paragraph{SQ Lower Bounds for Real-valued Functions}
	
	Prior work \cite{szorenyi2009characterizing, feldman2012complete} has already established the following fundamental result, which we phrase in terms of our definition of statistical dimension. For the reader's convenience, we include a 
	proof in \cref{sec:szorenyi-proof}.
	
	\begin{theorem}
		Let $D$ be a distribution on $X$, and let $\cC$ be a real-valued concept class over a domain $X$ such that $\|c\|_D > \epsilon$ for all $c \in \cC$. Consider any SQ learner that is allowed to make only inner product queries to an SQ oracle for the labeled distribution $D_c$ for some unknown $c \in \cC$. Let $d = \sda_D(\cC, \gamma)$. Then any such SQ learner needs at least $\Omega(d)$ queries of tolerance $\sqrt{\gamma}$ to learn $\cC$ up to $L_2$ error $\epsilon$.
	\end{theorem}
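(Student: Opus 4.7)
The plan is to use the standard SQ lower bound paradigm of Szörényi and Feldman, adapted to inner product queries on real-valued concepts. Suppose for contradiction that an SQ learner $A$ makes $q < d/2$ inner product queries of tolerance $\tau = \sqrt{\gamma}$ and outputs $\tilde c$ with $\|\tilde c - c\|_D \leq \epsilon$ on every target $c \in \cC$. The idea is to simulate the oracle by the ``null oracle'' that always answers $0$, show that this is a valid oracle for more than half of $\cC$, and then force a contradiction using the norm lower bound.

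First I would prove a per-query lemma: for any fixed $g$ with $\|g\|_D \leq 1$, the correlated set $\cC_g := \{c \in \cC : |\langle g, c\rangle_D| > \tau\}$ has size strictly less than $|\cC|/d$. Otherwise, by the definition of $\sda$ we would have $\rho_D(\cC_g) \leq \gamma$; yet choosing signs $\sigma_c := \sgn(\langle g, c\rangle_D)$ and applying Cauchy--Schwarz,
\[
|\cC_g|\tau < \sum_{c\in\cC_g} |\langle g, c\rangle_D| = \Bigl\langle g,\,\sum_{c\in\cC_g} \sigma_c c \Bigr\rangle_D \leq \|g\|_D \Bigl\|\sum_{c\in\cC_g} \sigma_c c\Bigr\|_D \leq |\cC_g|\sqrt{\rho_D(\cC_g)},
\]
where the final step uses $\|\sum_c \sigma_c c\|_D^2 \leq \sum_{c,c'} |\langle c, c'\rangle_D| = |\cC_g|^2 \rho_D(\cC_g)$. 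This gives $\rho_D(\cC_g) > \tau^2 = \gamma$, a contradiction. Now run $A$ against the null oracle: along the all-zero transcript, its queries $g_1,\dots,g_q$ are deterministic, and the response $0$ is within $\tau$ of the true population quantity $\langle g_i, c\rangle_D$ precisely when $c \notin \cC_{g_i}$. Hence the null oracle is a valid SQ oracle for every $c$ in $T := \cC \setminus \bigcup_i \cC_{g_i}$, and by the per-query lemma $|T| > |\cC|(1 - q/d) > |\cC|/2$.

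Since the learner's output on the all-zero transcript is a single fixed hypothesis $\tilde c^{\ast}$, successful learning would require $\|\tilde c^{\ast} - c\|_D \leq \epsilon$ for every $c \in T$. Summing the squared inequalities over $T$, expanding, and bounding $\langle \tilde c^{\ast}, \sum_{c\in T} c\rangle_D \leq \|\tilde c^{\ast}\|_D \cdot \|\sum_{c\in T} c\|_D \leq \|\tilde c^{\ast}\|_D \cdot |T|\sqrt{\gamma}$ via $\|\sum_{c \in T} c\|_D^2 \leq |T|^2 \rho_D(T) \leq |T|^2 \gamma$ (another application of SDA on $T$, valid since $|T| > |\cC|/d$), I obtain
\[
\frac{1}{|T|}\sum_{c \in T}\|c\|_D^2 \;\leq\; \epsilon^2 + \gamma - (\|\tilde c^{\ast}\|_D - \sqrt{\gamma})^2 \;\leq\; \epsilon^2 + \gamma,
\]
which contradicts the pointwise hypothesis $\|c\|_D > \epsilon$ once $\gamma$ is small compared to the gap between $\|c\|_D^2$ and $\epsilon^2$. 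The main obstacle is precisely this disambiguation step: combining the pointwise norm bound with the low average correlation on $T$ to force a contradiction requires $\sqrt{\gamma}$ to be strictly smaller than the norm-versus-error gap -- exactly the regime in which the lower bound is meaningful and which the downstream applications (with $\gamma = n^{-\Theta(k)}$ and $\epsilon = \exp(-\Theta(k))$) easily satisfy.
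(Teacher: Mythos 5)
Your proposal follows the same core strategy as the paper's proof (given in \cref{sec:szorenyi-proof}): answer every query with $0$, and use the SDA-plus-Cauchy--Schwarz argument to show each query eliminates fewer than $|\cC|/d$ concepts (your signed-sum version handles both signs of $\inn{g,c}_D$ at once, where the paper treats the sets $\{\inn{c,h_k}_D>\tau\}$ and $\{\inn{c,h_k}_D<-\tau\}$ separately and loses a factor of $2$). Where you genuinely diverge is the final disambiguation step. The paper adjoins the zero function to $\cC$, so the all-zeros transcript is exactly consistent with a legal target and the learner is forced to output $\tilde c$ with $\|\tilde c\|_D\le\epsilon$, which then conflicts with the norm lower bound on the surviving concepts; you instead average $\|\tilde c^{*}-c\|_D^2\le\epsilon^2$ over the surviving set $T$ and control the cross term via $\rho_D(T)\le\gamma$. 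Your route is legitimate and arguably more self-contained, but note that your conclusion $\frac{1}{|T|}\sum_{c\in T}\|c\|_D^2\le\epsilon^2+\gamma$ does \emph{not} contradict the stated hypothesis $\|c\|_D>\epsilon$; you need the strictly stronger condition $\|c\|_D^2>\epsilon^2+\gamma$, as you yourself flag. To be fair, the paper's own final step also silently needs more than stated (consistency of $\tilde c$ with both the zero target and a nonzero $c$ only forces $\|c\|_D\le 2\epsilon$, not $\le\epsilon$), and both slacks are harmless in the only regime where the theorem is invoked, namely $\tau=\sqrt\gamma\le\epsilon^2$ and $\|c\|_D\ge 3\epsilon$ as in \cref{cor:sq-lower-bound}. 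So I would accept your argument as a correct proof of the result as actually used, with the additive-$\gamma$ strengthening of the norm hypothesis made explicit.
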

	
	\paragraph{SQ Lower Bounds for {\em p}-concepts}
	
	It turns out to be fruitful to view our learning problem in terms of a decision problem over distributions. We define the problem of distinguishing a valid labeled distribution from a randomly labeled one, and show a lower bound for this problem. We then show that learning is at least as hard as distinguishing, thereby extending the lower bound to learning as well. Our analysis closely follows that of \cite{feldman2017statistical}.
	
	\begin{definition}[Distinguishing between labeled and uniformly random distributions]
		Let $\cC$ be a class of $p$-concepts over a domain $X$, and let $D$ be a distribution on $X$. Let $D_0 = D_{c_0}$ be the randomly labeled distribution $D \times \unif\{\pm 1\}$. Suppose we are given SQ access either to a labeled distribution $D_c$ for some $c \in \cC$ such that $c \neq c_0$ or to $D_0$. The problem of distinguishing between labeled and uniformly random distributions is to decide which.
	\end{definition}
	
    \begin{remark} Given access to $D_c$ for some truly boolean concept $c : X \to \{\pm 1\}$, it is easy to distinguish any other boolean function $c'$ from $c$ since $\|c - c'\|_D^2 = 2 - 2\inn{c, c'}_D$ (which is information-theoretically optimal as a distinguishing criterion) can be computed using a single inner product query. However, if $c$ and $c'$ are $p$-concepts, $\|c\|_D$ and $\|c'\|_D$ are not 1 in general and may be difficult to estimate. It is not obvious how best to distinguish the two, short of directly learning the target. \end{remark}
	
	Considering the distinguishing problem is useful because if we can show that distinguishing itself is hard, then any reasonable notion of learning will be hard as well, including weak learning. We give simple reductions for both our notions of learning.

	\begin{lemma}[Learning is as hard as distinguishing]\label{learning-reduction-combined}
		Let $D$ be a distribution over the domain $X$, and let $\cC$ be a $p$-concept class over $X$. Suppose there exists either 

			(a) a weak SQ learner capable of learning $\cC$ up to advantage $\epsilon$ using $q$ queries of tolerance $\tau$, where $\tau \leq \epsilon/2$; or,

			(b) an SQ learner capable of learning $\cC$ (assume $\|c\|_D \geq 3\epsilon$ for all $c \in \cC$) up to $L_2$ error $\epsilon$ using $q$ queries of tolerance $\tau$, where $\tau \leq \epsilon^2$.
		Then there exists a distinguisher that is able to distinguish between an unknown $D_c$ and $D_0$ using at most $q+1$ queries of tolerance $\tau$.
	\end{lemma}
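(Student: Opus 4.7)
The plan is a standard learning-to-distinguishing reduction. I run the given learner on whichever oracle the distinguisher is handed, extract whatever hypothesis $\tilde{c} : X \to [-1,1]$ it outputs, and then use a single additional inner product query $\phi(x,y) = \tilde{c}(x) \cdot y$. Since $\tilde{c}$ takes values in $[-1,1]$, we have $\|\tilde{c}\|_D \leq 1$, so this is a validly normalized inner product query. The true value of this query under $D_c$ is $\inn{\tilde{c}, c}_D$; under $D_0$ the label is independent of $x$ with mean zero, so the true value is exactly $0$ no matter what $\tilde{c}$ the learner happened to emit. Hence the whole argument reduces to showing that in each case the promised learning guarantee forces $\inn{\tilde{c}, c}_D \geq 2\tau$ under $D_c$, so that a single threshold cleanly separates the two oracles.

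For part (a) the bound is immediate: weak learning gives $\inn{\tilde{c}, c}_D \geq \epsilon$, so the additional query's response is at least $\epsilon - \tau \geq \epsilon/2$ under $D_c$ and at most $\tau \leq \epsilon/2$ under $D_0$, and a threshold at $\epsilon/2$ (with a strict inequality absorbed into a slightly tighter tolerance) decides. For part (b) I convert the $L_2$-error bound $\|c - \tilde{c}\|_D \leq \epsilon$ into a correlation bound by writing $\inn{\tilde{c}, c}_D = \|c\|_D^2 - \inn{c - \tilde{c}, c}_D$ and applying Cauchy--Schwarz:
\[
\inn{\tilde{c}, c}_D \geq \|c\|_D^2 - \|c - \tilde{c}\|_D \cdot \|c\|_D = \|c\|_D \bigl(\|c\|_D - \|c - \tilde{c}\|_D\bigr) \geq 3\epsilon \cdot 2\epsilon = 6\epsilon^2.
\]
With $\tau \leq \epsilon^2$, the $D_c$ response is at least $5\epsilon^2$ while the $D_0$ response is at most $\epsilon^2$, so thresholding (say) at $3\epsilon^2$ decides.

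There is no real obstacle beyond tracking the constants, but one subtle point is worth flagging: the learner carries no behavioral guarantee when fed an oracle outside the class, so when the distinguisher hands it $D_0$, the learner may output an arbitrary $\tilde{c}$ (and the SQ oracle may respond adversarially within tolerance). This is harmless precisely because the final query has true expectation $0$ under $D_0$ for \emph{every} choice of $\tilde{c}$ with $\|\tilde{c}\|_D \leq 1$; we invoke the learner's output promise only in the $D_c$ branch. The total query count is $q$ (from simulating the learner) plus $1$ (the correlation query), all at tolerance $\tau$, as claimed.
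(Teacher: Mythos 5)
Your proof is correct. Part (a) is essentially identical to the paper's: run the learner, then issue the single inner product query $h(x,y)=\tilde{c}(x)y$, whose true expectation is $\geq \epsilon$ under $D_c$ and exactly $0$ under $D_0$. Part (b) takes a genuinely different (though equally short) route. The paper's final query in case (b) is $h(x,y)=\tilde{c}(x)^2$, i.e.\ a norm test: by the triangle inequality $\|\tilde{c}\|_D \geq \|c\|_D - \|c-\tilde{c}\|_D \geq 2\epsilon$ under $D_c$, while $\|\tilde{c}\|_D \leq \epsilon$ under $D_0$, and tolerance $\epsilon^2$ separates the squared norms. You instead convert the $L_2$ guarantee into a correlation guarantee, $\inn{\tilde{c},c}_D \geq \|c\|_D\bigl(\|c\|_D - \|c-\tilde{c}\|_D\bigr) \geq 6\epsilon^2$, and reuse the same inner product query $\tilde{c}(x)y$ as in case (a). Both arguments are valid under the stated hypotheses and give the same $q+1$ query count at tolerance $\tau \leq \epsilon^2$. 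What your version buys is uniformity: the distinguisher's extra query is a correlational query in both cases, whereas the paper's case (b) needs a general (non-correlational) query; this is immaterial here because \cref{distinguishing-lower-bound} lower-bounds arbitrary queries, but it would matter if one wanted the reduction to stay within the inner-product-query model. Your flagged subtlety --- that the learner has no guarantee when fed $D_0$, and that the argument only needs the final query's true expectation to vanish under $D_0$ for every bounded $\tilde{c}$ --- is exactly the point the paper relies on implicitly.
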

	\begin{proof}
			(a) Run the weak learner to obtain $\tilde{c}$. If $c \neq c_0$, we know that $\inn{\tilde{c}, c}_D \geq \epsilon$, whereas if $c = c_0$, then $\inn{\tilde{c}, c}_D = 0$ no matter what $\tilde{c}$ is. A single additional query ($h(x,y) = \tilde{c}(x)y$) of tolerance $\epsilon/2$ distinguishes between the two cases.
			
			(b) Run the learner to obtain $\tilde{c}$. If $c \neq c_0$, i.e.\ $\|c\|_D \geq 3\epsilon$, we know that $\|\tilde{c} - c\|_D \leq \epsilon$, so that by the triangle inequality, $ \|\tilde{c}\|_D \geq \|c\|_D - \|\tilde{c} - c\|_D \geq 2\epsilon$. But if $c = c_0$, then $ \|\tilde{c}\|_D \leq \epsilon$. An additional query ($h(x,y) = \tilde{c}(x)^2$) of tolerance $\epsilon^2$ suffices to distinguish the two cases.
	\end{proof}
	
	We now prove the main lower bound on distinguishing.
	
	\begin{theorem}\label{distinguishing-lower-bound}
		Let $D$ be a distribution over the domain $X$, and let $\cC$ be a $p$-concept class over $X$. Then any SQ algorithm needs at least $d = \sda(\cC, \gamma)$ queries of tolerance $\sqrt{\gamma}$ to distinguish between $D_c$ and $D_0$ for an unknown $c \in \cC$. (We will consider deterministic SQ algorithms that always succeed, for simplicity.)
	\end{theorem}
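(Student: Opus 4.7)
The strategy is the classical adversary argument for SQ lower bounds, adapted to the distinguishing problem: against a deterministic SQ distinguisher making at most $q < d$ queries of tolerance $\tau = \sqrt{\gamma}$, I will exhibit an oracle strategy that answers as if the input were $D_0$ yet remains simultaneously consistent with $D_{c^*}$ for some $c^* \in \cC$. Because the algorithm's output is determined by the transcript of query/response pairs, it must then return the same decision in both cases, contradicting correctness of distinguishing.

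Concretely, I answer each query $h_i$ with $v_i = \Ex_{D_0}[h_i]$. A concept $c$ is ``killed'' by query $i$ if $|\Ex_{D_c}[h_i] - v_i| > \tau$. Since $D_0 = D \times \unif\{\pm 1\}$ and $\Ex[Y \mid X] = c(X)$ under $D_c$, a short calculation shows
\[
\Ex_{D_c}[h_i] - \Ex_{D_0}[h_i] = \inn{c, g_i}_D, \qquad g_i(x) := \tfrac{1}{2}\bigl(h_i(x,1) - h_i(x,-1)\bigr),
\]
and the unit-norm assumption $\|h_i(\cdot, \pm 1)\|_D \leq 1$ together with $(a-b)^2 \leq 2(a^2+b^2)$ gives $\|g_i\|_D \leq 1$. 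Thus the kill set is exactly $B_i = \{ c \in \cC : |\inn{c, g_i}_D| > \sqrt{\gamma}\}$.

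The heart of the argument is to show $|B_i| \leq |\cC|/d$ for each $i$. Suppose instead $|B_i| > |\cC|/d$; then the definition $\sda_D(\cC, \gamma) = d$ forces $\rho_D(B_i) \leq \gamma$. Choosing signs $\epsilon_c = \sgn \inn{c, g_i}_D$, Cauchy-Schwarz bounds $\sum_{c \in B_i} \epsilon_c \inn{c, g_i}_D \leq \bigl\|\sum_c \epsilon_c c\bigr\|_D \cdot \|g_i\|_D$, while expanding the squared norm gives $\bigl\|\sum_c \epsilon_c c\bigr\|_D^2 \leq \sum_{c,c' \in B_i} |\inn{c, c'}_D| = |B_i|^2 \rho_D(B_i) \leq |B_i|^2 \gamma$; together these contradict the assumption that each summand strictly exceeds $\sqrt{\gamma}$. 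A union bound over the $q < d$ queries then gives $|\bigcup_i B_i| < |\cC|$, so some $c^* \in \cC$ survives all queries, and the pre-committed responses $v_i$ are all legal SQ-oracle answers for $D_{c^*}$ as well, completing the reduction. The only mild subtlety I anticipate is the normalization step yielding $\|g_i\|_D \leq 1$ from the per-label norm convention of the SQ model adopted here; otherwise this is the standard SDA-based lower bound of \cite{feldman2017statistical} specialized to the decision version of the learning problem.
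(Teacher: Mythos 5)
Your proof is correct and follows essentially the same route as the paper's: the same adversary answering with $\Ex_{D_0}[h]$, the same reduction of the query gap to the inner product $\inn{c,\hat h}_D$ with $\hat h(x)=(h(x,1)-h(x,-1))/2$, and the same SDA-plus-Cauchy--Schwarz bound showing each query kills at most $|\cC|/d$ concepts. The only cosmetic difference is that you bound $\|\hat h\|_D\le 1$ via $(a-b)^2\le 2(a^2+b^2)$ where the paper uses the triangle inequality; both are fine.
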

	\begin{proof}
		Consider any successful SQ algorithm $A$. Consider the adversarial strategy where to every query $h : X \times \{-1, 1\} \to [-1, 1]$ of $A$ (with tolerance $\tau = \sqrt{\gamma}$), we respond with $\Ex_{D_0}[h]$. We can pretend that this is a valid answer with respect to any $c \in \cC$ such that $|\Ex_{D_c}[h] - \Ex_{D_0}[h]| \leq \tau$. Our argument will be based on showing that each such query rules out fairly few distributions, so that the number of queries required in total is large.
		
		Since we assumed that $A$ is a deterministic algorithm that always succeeds, it eventually correctly guesses that it is $D_0$ that it is getting answers from. Say it takes $q$ queries to do so. For the $k\th$ query $h_k$, let $S_k$ be the set of concepts in $\cC$ that are ruled out by our response $\Ex_{D_0}[h_k]$: \[ S_k = \{ c \in \cC \mid |\Ex_{D_c}[h] - \Ex_{D_0}[h]| \ > \tau \}. \]
		We'll show that
			
			(a) on the one hand, $\cup_{k=1}^q S_k = \cC$, so that $\sum_{k=1}^q |S_k| \geq |\cC|$,
			
			(b) while on the other, $|S_k| \leq |\cC|/d$ for every $k$.
		Together, this will mean that $q \geq d$.
		
		For the first claim, suppose $\cup_{k=1}^q S_k$ were not all of $\cC$, and indeed say $c \in \cC \setminus (\cup_{k=1}^q S_k)$. This is a distribution that our answers were consistent with throughout, yet one that $A$'s solution ($D_0$) is incorrect for. But $A$ always succeeds, so for it not to have ruled out this $D_c$ is impossible.
		
		For the second claim, suppose for the sake of contradiction that for some $k$, $|S_k| > |\cC|/d$. By \cref{lem:SQ-dim}, this means we know that $\rho_D(S_k) \leq \gamma$. One of the key insights in the proof of \cite{szorenyi2009characterizing} is that by expressing query expectations entirely in terms of inner products, we gain the ability to apply simple algebraic techniques. To this end, for any query function $h$, let $\hat{h}(x) = (h(x, 1) - h(x, -1))/2$. Observe that for any $p$-concept $c$, \begin{align*} \inn{\hat{h}, c}_D &= \Ex_{x \sim D}\left[h(x, 1)\frac{c(x)}{2}\right] - \Ex_{x \sim D}\left[h(x, -1)\frac{c(x)}{2}\right] \\
		&= \Ex_{x \sim D}\left[h(x, 1)\frac{1+c(x)}{2}\right] \\ &\quad + \Ex_{x \sim D}\left[h(x, -1)\frac{1-c(x)}{2}\right] 
		\\ &\quad  - \Ex_{x \sim D}\left[h(x, 1)\frac{1}{2}\right]
		- \Ex_{x \sim D}\left[h(x, -1)\frac{1}{2}\right] \\
		&= \Ex_{D_c}[h] - \Ex_{D_0}[h], \end{align*} the difference between the query expectations wrt $D_c$ and $D_0$. Here we have expanded each $\Ex_{D_c}[h]$ using the fact that the label for $x$ is $1$ with probability $(1 + c(x))/2$ and $-1$ otherwise. Thus $|\inn{\hat{h_k}, c}_D|$, where $h_k$ is the $k\th$ query, is greater than $\tau$ for any $c \in S_k$, since $S_k$ are precisely those concepts ruled out by our response. We will show contradictory upper and lower bounds on the following quantity: \begin{align*} \Phi = \inn*{\hat{h_k}, \sum_{c \in S_k} c \cdot \sgn(\inn{\hat{h_k}, c}_D)}_{D}. \end{align*} Note that since every query $h$ satisfies $\|h(\cdot, y)\|_D \leq 1$ for all $y$, it follows by the triangle inequality that $\|\hat{h}\|_D \leq 1$.	So by Cauchy-Schwarz and our observation that $\rho_D(S_k) \leq \gamma$, \begin{align*}
		\Phi^2 &\leq  \|\hat{h_k}\|_D^2 \cdot \left\|\sum_{c \in S_k} c \cdot \sgn(\inn{\hat{h_k}, c})\right\|_D^2 \\
		&\leq  \sum_{c, c' \in S_k} |\inn{c, c'}_D| = |S_k|^2 \rho_{D}(S_k) \leq |S_k|^2 \gamma.
		\end{align*}
		
		However since $|\inn{\hat{h_k}, c}_D| \ > \tau$, we also have that $\Phi = \sum_{c \in S_k} |\inn{\hat{h_k}, c}_D| \ > |S_k| \tau.$ 
		Since $\tau = \sqrt{\gamma}$, this contradicts our upper bound and in turn completes the proof of our second claim. And as noted earlier, the two claims together imply that $q \geq d$.
	\end{proof}
	
	
	The final lower bounds on learning thus obtained are stated as a corollary for convenience. The proof follows directly from Lemma \ref{learning-reduction-combined} and Theorem \ref{distinguishing-lower-bound}.
	
	\begin{corollary}\label{cor:sq-lower-bound}
		Let $D$ be a distribution over the domain $X$, and let $\cC$ be a $p$-concept class over $X$. Let $\gamma, \tau$ be such that $\sqrt{\gamma} \leq \tau$. Let $d = \sda(\cC, \gamma)$.
			
			(a) Let $\epsilon$ be such that $\tau \leq \epsilon^2$, and assume $\|c\|_D \geq 3\epsilon$ for all $c \in \cC$. Then any SQ learner learning $\cC$ up to $L_2$ error $\epsilon$ requires at least $d - 1$ queries of tolerance $\tau$.
			
			(b) Let $\epsilon$ be such that $\tau \leq \epsilon/2$. Then any weak SQ learner learning $\cC$ up to advantage $\epsilon$ requires at least $d - 1$ queries of tolerance $\tau$.
	\end{corollary}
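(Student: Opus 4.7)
The plan is to prove both parts as immediate consequences of the reduction from learning to distinguishing (Lemma \ref{learning-reduction-combined}) combined with the distinguishing lower bound (Theorem \ref{distinguishing-lower-bound}). The general template is a standard contrapositive: if a learner used too few queries, it would give rise to a distinguisher that also uses too few queries, contradicting the distinguishing lower bound.

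For part (a), I would argue as follows. Suppose toward contradiction an SQ learner $A$ learns $\cC$ up to $L_2$ error $\epsilon$ using $q < d - 1$ queries of tolerance $\tau$. Since $\tau \leq \epsilon^2$ and the norm condition $\|c\|_D \geq 3\epsilon$ holds, clause (b) of Lemma \ref{learning-reduction-combined} applies and converts $A$ into a distinguisher between $D_c$ (for unknown $c \in \cC$) and $D_0$ using $q + 1 \leq d - 1$ queries of tolerance $\tau$. Now invoke Theorem \ref{distinguishing-lower-bound}: distinguishing requires at least $d = \sda(\cC, \gamma)$ queries at tolerance $\sqrt{\gamma}$. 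Since $\sqrt{\gamma} \leq \tau$, a distinguisher at tolerance $\tau$ is only weaker (it receives less accurate answers) than one at tolerance $\sqrt{\gamma}$, so the same lower bound of $d$ queries applies at tolerance $\tau$. This contradicts $q + 1 < d$, completing the proof.

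Part (b) is entirely analogous, invoking clause (a) of Lemma \ref{learning-reduction-combined} instead, which applies under the weaker hypothesis $\tau \leq \epsilon/2$ (no norm assumption is needed because the additional query $\tilde{c}(x) \cdot y$ already discriminates between the correlation being $\geq \epsilon$ versus exactly $0$). The rest of the argument is identical: a weak learner using $q$ queries becomes a distinguisher using $q+1$ queries of the same tolerance, and Theorem \ref{distinguishing-lower-bound} then forces $q + 1 \geq d$.

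There is no real obstacle here; the only subtlety is noting that the monotonicity of SQ lower bounds in the tolerance parameter lets us apply the $\sqrt{\gamma}$-tolerance lower bound at the larger tolerance $\tau \geq \sqrt{\gamma}$, and verifying that the hypotheses on $\tau$ in each clause of Lemma \ref{learning-reduction-combined} are exactly what part (a) and part (b) of the corollary assume. Both checks are direct, so the corollary's one-line proof description is accurate.
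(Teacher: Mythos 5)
Your proposal is correct and matches the paper's intended argument exactly: the paper itself states that the corollary ``follows directly from Lemma \ref{learning-reduction-combined} and Theorem \ref{distinguishing-lower-bound},'' which is precisely the contrapositive reduction you carry out, including the correct matching of the hypotheses on $\tau$ to the two clauses of the lemma and the monotonicity observation that a lower bound at tolerance $\sqrt{\gamma}$ applies at the larger tolerance $\tau$.
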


\section{Experiments}
We include experiments for both regression and classification. We train an overparameterized neural network on data from our function class, using gradient descent. We find that we are able to achieve close to zero training error, while test error remains high. This is consistent with our lower bound for these classes of functions.
\begin{figure}[p]
\centering
    \begin{subfigure}[t]{0.48\textwidth}
        \centering
    \includegraphics[width=\textwidth]{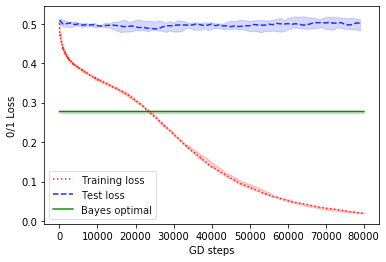}
    \caption{Learning a softmax of a one-layer tanh network}
    \end{subfigure}
    ~~
    \begin{subfigure}[t]{0.48\textwidth}
        \centering
    \includegraphics[width=\textwidth]{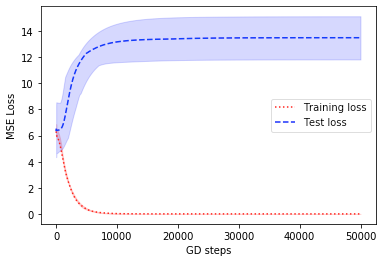}
    \caption{Learning a linear combination of tanhs}
    \end{subfigure}
    \caption{In (a) the target function is a softmax ($\pm 1$ labels) of a sum of $2^7$ tanh activations with $n = 14$; in (b) the labels are obtained similarly but without the softmax.
    In both cases, we train a 1-layer neural network with $5 \cdot 2^7 = 640$ tanh units (hence $10241$ parameters) using a training set of size $6000$ and a test set of size $1000$, with the learning rate set to $0.01$. For (a) we take the sign of this trained network and measure its training and testing 0/1 loss; for (b) we measure the train and test square-loss of the learned network directly. In (a) we also plot the test error of the bayes optimal network (sign of the target function).
    }
    \label{fig:softmax_tanh}
\end{figure}	

For classification, we use a training set of size $T$ of data corresponding to $f \in \cCo(n, k)$ instantiated with $\phi = \tanh$ and $\psi = \tanh$. We draw $x \sim \mathcal{N}(0, I_n)$. For each $x$, $y$ is picked randomly from $\{\pm 1\}$ in such a way that $\Ex[y | x] = f(x)$. Since the outer activation $\psi$ is $\tanh$, this can be thought of as applying a softmax to the network's output, or as the Boolean label corresponding to a logit output. We train a sum of tanh network (i.e.\ a network in which the inner activation is $\tanh$ and no outer activation is applied) on this data using gradient descent on squared loss, threshold the output, and plot the resulting 0/1 loss. See \cref{fig:softmax_tanh}(a). This setup models a common way in which neural networks are trained for classification problems in practice.

For regression, we use a training set of size $T$ of data corresponding to $f \in \cCo(n, k)$ instantiated with $\phi = \tanh$ and $\psi$ being the identity. We draw $x \sim \mathcal{N}(0, I_n)$, and $y = f(x)$. We train a sum of tanh network on this data using gradient descent on squared loss, which we plot in \cref{fig:softmax_tanh}(b). This setup models the natural way of using neural networks for regression problems.

In both cases, we train neural networks whose number of parameters considerably exceeds the amount of training data. In all our experiments, we plot the median over 10 trials and shade the inter-quartile range of the data. 

Similar results hold with the inner activation $\phi$ being $\relu$ instead of $\tanh$, and are shown in \cref{fig:softmax_relu-2}.

\begin{figure}[p]
\centering
    \begin{subfigure}{0.48\textwidth}
        \centering
    \includegraphics[width=\textwidth]{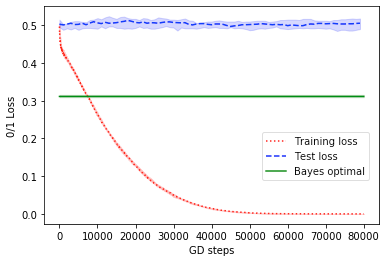}
    \caption{Learning a softmax of a one-layer ReLU network}
    \end{subfigure}
    ~~
    \begin{subfigure}{0.48\textwidth}
        \centering
    \includegraphics[width=\textwidth]{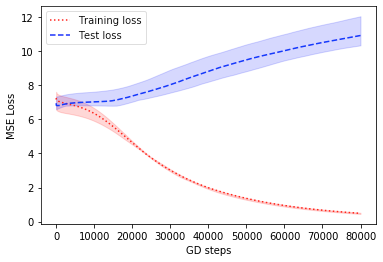}
    \caption{Learning a linear combination of ReLUs}
    \end{subfigure}
    \caption{In (a) the target function is a softmax ($\pm 1$ labels) of a sum of $2^8$ ReLU activations with $n = 14$; in (b) the labels are obtained similarly but without the softmax.
    In both cases, we train a 1-layer neural network with $5 \cdot 2^8 = 1280$ ReLU units (hence $20481$ parameters) using a training set of size $6000$ and a test set of size $1000$, with the learning rate set to $0.005$ for classification and $0.002$ for regression. For (a) we take the sign of this trained network and measure its training and testing 0/1 loss; for (b) we measure the train and test square loss of the learned network directly. In (a) we also plot the test error of the bayes optimal network (sign of the target function).
    }
    \label{fig:softmax_relu-2}
\end{figure}

\clearpage
\bibliography{refs}
\bibliographystyle{alpha}
	
\appendix
\section{Bounding the function norms under the Gaussian}\label{sec:norm-bounds}

Our goal in this section will be to give lower bounds on the norms of the functions in $\cCo(n,k)$, which is a technical requirement for our results to hold (see \cref{learning-reduction-combined} and \cref{cor:sq-lower-bound}). Note that when learning with respect to $L_2$ error, such a lower bound is necessary if we wish to state SQ lower bounds, since if the target had small norm, say $\|f\|_D \leq \epsilon$, then the zero function trivially achieves $L_2$ error $\epsilon$.

All inner products and norms in this section will be with respect to the standard Gaussian, $\mathcal{N}(0, I)$. 
Since we will fix $S$ throughout, for our purposes the only relevant part of the input is $x_S$ and so we drop the subscripts and let $g = g_S, f = f_S$ and $x = x_S$, so that $g$ and $f$ are functions of $x \in \R^k$. 
Our approach will be as follows. 
In order to prove a norm lower bound on $f$, we will prove an anticoncentration result for $g$. To this end we first calculate the second moment of $g$ in terms of the Hermite coefficients of $\phi$. 

\begin{lemma} \label{lem:hermite-norm-expansion}
	Under the distribution $\mathcal{N}(0, I_n)$, let the Hermite representation of $\phi$ be $\phi(x) = \sum_{i = 0}^\infty \hat{\phi_i} \nher_i(x)$, where $\nher_i(x)$ is the $i\th$ normalized probabilists' Hermite polynomial. Then
	\begin{align*}
		\Ex\left[g(x)^2\right] 
		= 4^k \sum_{i \ge 0} \frac{\hat{\phi_i}^2}{k^i} \sum_{\substack{i_1 + \cdots + i_k = i \\ i_1, \dots, i_k\text{ are odd}}} \binom{i}{i_1, \dots, i_k}.
	\end{align*}
\end{lemma}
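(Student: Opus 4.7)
The plan is to expand $g(x)^2$ using the definition of $g$, take expectations by pairing up hidden units, apply the standard Hermite identity for correlated Gaussians, and then reduce the resulting combinatorial sum via a change of variables.

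First, I would expand
\begin{align*}
\Ex[g(x)^2] = \sum_{w, w' \in \{\pm 1\}^k} \chi(w)\chi(w')\, \Ex\!\left[\phi\!\left(\tfrac{\langle w, x\rangle}{\sqrt{k}}\right) \phi\!\left(\tfrac{\langle w', x\rangle}{\sqrt{k}}\right)\right].
\end{align*}
For each pair $(w, w')$, the random variables $U = \langle w, x\rangle/\sqrt{k}$ and $V = \langle w', x\rangle/\sqrt{k}$ are jointly standard Gaussian with correlation $\rho = \langle w, w'\rangle/k$, since $w, w' \in \{\pm 1\}^k$. Substituting the Hermite expansion $\phi = \sum_i \hat{\phi_i}\tilde H_i$ and using the orthogonality identity $\Ex[\tilde H_i(U)\tilde H_j(V)] = \rho^i \delta_{ij}$ for such Gaussian pairs, I get
\begin{align*}
\Ex[g(x)^2] = \sum_{i \geq 0} \frac{\hat{\phi_i}^2}{k^i} \sum_{w, w' \in \{\pm 1\}^k} \chi(w)\chi(w')\, \langle w, w'\rangle^i.
\end{align*}

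The second step is to evaluate the inner sum. Fixing $w'$ and substituting $u = w \circ w'$ (a bijection of $\{\pm 1\}^k$) gives $\chi(w)\chi(w') = \chi(u)$ and $\langle w, w'\rangle = \sum_j u_j$, so summing over $w'$ contributes a factor of $2^k$ and leaves $2^k \sum_{u \in \{\pm 1\}^k} \chi(u)(\sum_j u_j)^i$. Expanding by the multinomial theorem,
\begin{align*}
\chi(u)\Bigl(\sum_j u_j\Bigr)^i = \sum_{i_1 + \cdots + i_k = i} \binom{i}{i_1,\dots,i_k} u_1^{i_1+1}\cdots u_k^{i_k+1}.
\end{align*}
Now $\sum_{u \in \{\pm 1\}^k} u_1^{a_1}\cdots u_k^{a_k}$ equals $2^k$ when every $a_j$ is even and vanishes otherwise; here $a_j = i_j + 1$, so only tuples with every $i_j$ odd contribute, each contributing $2^k$. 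Combining the two factors of $2^k$ gives $4^k$, yielding the claimed identity.

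There is no real obstacle; the only step that requires a known fact rather than direct calculation is the Hermite orthogonality identity $\Ex[\tilde H_i(U)\tilde H_j(V)] = \rho^i\delta_{ij}$ for correlated standard Gaussians, which I would cite as a standard result. The remainder is bookkeeping with the multinomial expansion and the parity constraint from $\chi(u)$.
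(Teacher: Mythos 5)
Your proof is correct and follows essentially the same route as the paper's: expand the square as a double sum over sign vectors, apply the correlated-Gaussian Hermite identity $\Ex[\nher_i(U)\nher_j(V)] = \delta_{ij}\rho^i$, collapse the double sum via the change of variables $u = w \circ w'$, and finish with the multinomial expansion and the parity constraint forcing all $i_j$ odd. The only cosmetic difference is that you pull the sum over $i$ outside before the change of variables rather than after, and you could add a one-line remark (as the paper does) that $\phi \in L_2(\mathcal{N}(0,1))$ gives $\sum_i \hat{\phi_i}^2 < \infty$, justifying the interchange of summation and expectation.
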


\begin{proof}
	We use $\Ex$ in this proof instead of $\Ex_{x \sim \mathcal{N}(0, I_n)}$ for simplicity. Then we have 
	\begin{align*}
	&\Ex\!\left[g(x)^2\right] \\
	=&\, \Ex\!\left[\sum_{\alpha \in \{\pm 1\}^k} \chi(\alpha) \phi\left(\frac{\dotp{\alpha}{ x_S}}{\sqrt{k}}\right) \right] \!\!\!\left[\sum_{\beta \in \{\pm 1\}^k} \chi(\beta) \phi\left(\frac{\dotp{\beta}{x_S}}{\sqrt{k}}\right) \right] \\
	=&\, \sum_{\alpha,\beta \in \{\pm 1\}^k} \prod_{l = 1}^k \alpha_l \beta_l\, \Ex\!\left[\phi\left(\frac{\dotp{\alpha}{x_S}}{\sqrt{k}}\right) \phi\left(\frac{\dotp{\beta}{x_S}}{\sqrt{k}}\right) \right] \\
	=&\, \sum_{\alpha,\beta \in \{\pm 1\}^k} \prod_{l = 1}^k \alpha_l \beta_l\, \Ex\!\!\left[\sum_{i, j \ge 0} \hat{\phi_i} \hat{\phi_j}\nher_i\!\left(\frac{\dotp{\alpha}{x_S}}{\sqrt{k}} \right)\! \nher_j\!\left(\frac{\dotp{\beta}{x_S}}{\sqrt{k}} \right)\! \right] \\
	=&\, \sum_{\alpha,\beta \in \{\pm 1\}^k} \prod_{l = 1}^k \alpha_l \beta_l \!\sum_{i, j \ge 0} \hat{\phi_i} \hat{\phi_j}\, \Ex\!\left[\nher_i\!\left(\frac{\dotp{\alpha}{x_S}}{\sqrt{k}} \right) \! \nher_j\!\left(\frac{\dotp{\beta}{x_S}}{\sqrt{k}} \right) \right].
	\end{align*}
	Since $x \sim \mathcal{N}(0, I_k)$, $\frac{\inn{\alpha, x_S}}{\sqrt{k}}$ and $\frac{\inn{\beta, x_S}}{\sqrt{k}}$ are both standard Gaussian and have correlation $\frac{\inn{\alpha, \beta}}{k}$, we then apply the following well-known property of the Hermite polynomials.
	\[
		\Ex_{(a, b)^T \sim \mathcal{N}\left(0, \bigl( \begin{smallmatrix} 1 & \rho \\ \rho & 1 \end{smallmatrix} \bigr)\right)} \nher_i(a) \nher_j(b) = \delta_{i, j} \rho^i,
	\]
	where $\delta_{i, j}$ is the Dirac delta function.
	\begin{align*}
	\Ex\left[g(x)^2\right]
	=&\, \sum_{\alpha,\beta \in \{\pm 1\}^k} \prod_{l = 1}^k \alpha_l \beta_l \sum_{i \ge 0} \hat{\phi_i}^2 \left(\frac{\dotp{\alpha}{\beta}}{k}\right)^i\\
	=&\, \sum_{w, \theta \in \{\pm 1\}^k} \prod_{l = 1}^k w_l \sum_{i \ge 0} \hat{\phi_i}^2 \left(\frac{\sum_{l = 1}^k w_l}{k} \right)^i \\
	=&\, 2^k \sum_{w \in \{\pm 1\}^k} \prod_{l = 1}^k w_l \sum_{i \ge 0} \hat{\phi_i}^2 \left(\frac{\sum_{l = 1}^k w_l}{k} \right)^i,
	\end{align*}
	where $w_i = \alpha_i \beta_i$ and $\theta_i = w_i \alpha_i$. Note that \cref{lem:inner-activation} implies that $\sum_{i = 0}^\infty \hat{\phi_i}^2 < \infty$  , the series above is absolute convergent. Then, 
	\begin{align*}
	&\, \Ex\left[g(x)^2 \right]  \\
	=&\, 2^k \sum_{i \ge 0} \hat{\phi_i}^2 \sum_{w \in \{\pm 1\}^k} \prod_{l = 1}^k w_l \left(\frac{\sum_{l = 1}^k w_l}{k} \right)^i \\
	=&\, 2^k \sum_{i \ge 0} \frac{\hat{\phi_i}^2}{k^i} \sum_{w \in \{\pm 1\}^k} \prod_{l = 1}^k w_l \sum_{i_1 + \cdots + i_k = i} \prod_{l = 1}^k w_{l}^{i_l} \binom{i}{i_1, \dots, i_k} \\
	=&\, 2^k \sum_{i \ge 0} \frac{\hat{\phi_i}^2}{k^i} \sum_{i_1 + \cdots + i_k = i} \binom{i}{i_1, \dots. i_k} \sum_{w \in \{\pm 1\}^k} \prod_{l = 1}^k w_{l}^{i_l + 1} \\
	=&\, 2^k \sum_{i \ge 0} \frac{\hat{\phi_i}^2}{k^i} \sum_{i_1 + \cdots + i_k = i} \binom{i}{i_1, \dots, i_k} \prod_{l = 1}^k \left[1^{i_l + 1} + (-1)^{i_l + 1}\right] \\
	=&\, 4^k \sum_{i \ge 0} \frac{\hat{\phi_i}^2}{k^i} \sum_{\substack{i_1 + \cdots + i_k = i \\ i_1, \dots, i_k\text{ are odd}}} \binom{i}{i_1, \dots, i_k}
	\end{align*}
	since we consider all distinct monomials in $\big(\sum_{l = 1}^k w_l\big)^i$. Note that $\sum_{\substack{i_1 + \cdots + i_k = i \\ i_1, \dots, i_k\text{ are odd}}} \binom{i}{i_1, \dots, i_k}$ is always non-negative and is positive iff $i \ge k$ and $i \equiv k \pmod{2}$.
\end{proof}

\subsection{ReLU Activation}
The goal of this section is to give a lower-bound of $\norm{f}$ for $\phi = \relu$ under the standard Gaussian distribution $\mathcal{N}(0, I)$. To this end, we prove an anti-concentration for $g$.
We first give a lower bound on $\norm{g}$ based on the Hermite coefficients of $\phi$. If $g$ were bounded, this alone would imply anti-concentration as in \cref{sigmoid-section}. But since it is not, we first introduce $g^T$, where all activations are truncated at some $T$. We pick $T$ large enough that $g$ and $g^T$ behave almost identically over ${\cal N}(0, I)$. We then show a lower bound on $\norm{g^T}$, translate that into an anticoncentration result for $g^T$, and finally into one for $g$.

Let $T > 0$ be some constant to be determined later. Let 
\begin{equation} \nonumber
  \relu^T(x) = \min(\relu(x), T)
\end{equation}
and
\[
  g^T(x) = \sum\limits_{w \in \{\pm 1\}^k} \chi(w) \relu^T\left(\frac{\dotp{x}{w}}{\sqrt{k}}\right).
\]
The following lemma from \cite{goel2019time} describes the Hermite coefficients of ReLU. 
\begin{lemma}
	\begin{align*}
		\relu(x) = \sum_{i = 0}^\infty c_i \nher_i(x)
	\end{align*}
	where 
	\begin{align*}
		c_0 = \sqrt{\frac{1}{2 \pi}}, \quad&
		c_1 = \frac{1}{2}, \\
		c_{2i-1} = 0, \quad&
		c_{2i} = \frac{H_{2i}(0) + 2i H_{2i-2}(0)}{\sqrt{2\pi (2i)!}} \quad \text{for } i \ge 2.
	\end{align*}
	In particular, $c_{2i}^2 = \Theta(i^{-2.5})$.
\end{lemma}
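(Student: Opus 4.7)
The plan is to compute each coefficient directly from the definition $c_i = \Ex_{x \sim \mathcal{N}(0,1)}[\relu(x) \nher_i(x)] = \int_0^\infty x\, \nher_i(x) \phi(x)\, dx$, where $\phi$ denotes the standard Gaussian density (the integration is only over the positive half-line since $\relu$ vanishes on the negative half-line). The zeroth and first coefficients are immediate: $c_0 = \int_0^\infty x\, \phi(x)\, dx = 1/\sqrt{2\pi}$, and $c_1 = \int_0^\infty x^2 \phi(x)\, dx = 1/2$ (half the variance of a standard normal, by symmetry).

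For $i \ge 2$, I would apply the three-term recurrence $x\, \nher_i(x) = \sqrt{i+1}\, \nher_{i+1}(x) + \sqrt{i}\, \nher_{i-1}(x)$ (a renormalization of the classical probabilist's identity $xH_i = H_{i+1} + iH_{i-1}$) to reduce $c_i$ to the half-line integrals $I_j = \int_0^\infty \nher_j(x)\phi(x)\, dx$. The Rodrigues-type identity $H_j(x)\phi(x) = (-1)^j \phi^{(j)}(x)$ together with the fundamental theorem of calculus gives $\int_0^\infty H_j\phi = H_{j-1}(0)\phi(0)$ for $j \ge 1$, so $I_j = H_{j-1}(0)/(\sqrt{j!\,}\sqrt{2\pi})$. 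Substituting into the recurrence and simplifying $\sqrt{i}/\sqrt{(i-1)!} = i/\sqrt{i!}$ yields the closed form $c_i = (H_i(0) + i\, H_{i-2}(0))/(\sqrt{i!\,}\sqrt{2\pi})$. The parity identity $H_j(0) = 0$ for odd $j$ then immediately forces $c_{2i-1} = 0$ for $i \ge 2$, matching the claim.

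For the asymptotic $c_{2i}^2 = \Theta(i^{-2.5})$, I would plug in the explicit value $H_{2i}(0) = (-1)^i (2i)!/(2^i i!)$. A short algebraic manipulation, using $(2i)!/i! = 2(2i-1)(2i-2)!/(i-1)!$, collapses the two Hermite-at-zero terms into a single expression $H_{2i}(0) + 2i\, H_{2i-2}(0) = (-1)^{i+1}(2i-2)!/(2^{i-1}(i-1)!)$. Squaring and dividing by $2\pi(2i)!$ gives $c_{2i}^2 = [(2i-2)!]^2 / \bigl(2\pi(2i)! \cdot 4^{i-1}[(i-1)!]^2\bigr)$. Applying Stirling's approximation to each of $(2i-2)!$, $(2i)!$, and $(i-1)!$ and tracking the $4^i$, $i^{2i}$, and $e^{-2i}$ factors exposes a net polynomial factor $i^{-5/2}$, with the lower-order piece $((i-1)/i)^{2i-2} \to e^{-2}$ producing only a multiplicative constant.

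The routine part is the closed-form derivation, which is just bookkeeping with Hermite identities. The main obstacle is the asymptotic calculation: all of the exponential pieces $4^i$, $e^{-2i}$, and $i^{2i}$ must cancel precisely for the subpolynomial factor $i^{-5/2}$ to emerge, and one must be careful that the limit $((i-1)/i)^{2i-2} \to e^{-2}$ does not change the polynomial rate. No novel ideas are required beyond a careful Stirling expansion.
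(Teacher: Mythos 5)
Your derivation is correct. Note that the paper itself gives no proof of this lemma---it is imported verbatim from \cite{goel2019time}---so there is no in-paper argument to compare against; what you have supplied is a valid self-contained derivation. The closed form checks out: the three-term recurrence $x\nher_i = \sqrt{i+1}\,\nher_{i+1} + \sqrt{i}\,\nher_{i-1}$ is the correct normalization of $xH_i = H_{i+1} + iH_{i-1}$, the Rodrigues computation $\int_0^\infty H_j\phi = H_{j-1}(0)\phi(0)$ is right (the two signs $(-1)^{j+1}$ and $(-1)^{j-1}$ cancel), and the simplifications $\sqrt{i+1}/\sqrt{(i+1)!} = 1/\sqrt{i!}$ and $\sqrt{i}/\sqrt{(i-1)!} = i/\sqrt{i!}$ give exactly $c_i = (H_i(0) + iH_{i-2}(0))/\sqrt{2\pi\, i!}$, which vanishes for odd $i \ge 3$ and matches the stated $c_{2i}$. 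Your collapse of the numerator to $(-1)^{i+1}(2i-2)!/(2^{i-1}(i-1)!)$ is also correct. For the asymptotics, a slightly cleaner route than Stirling on three separate factorials is to write $c_{2i}^2 = \frac{1}{2\pi(2i)(2i-1)}\cdot\binom{2i-2}{i-1}4^{-(i-1)}$ and invoke the central binomial estimate $\binom{2m}{m}4^{-m} = \Theta(m^{-1/2})$, which immediately yields $\Theta(i^{-2}\cdot i^{-1/2}) = \Theta(i^{-2.5})$ with no exponential cancellations to track; but your Stirling bookkeeping arrives at the same place.
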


We can now derive a lower bound on the norm of $g$.
\begin{lemma} \label{lem:g-relu-norm}
  When $k$ is even, 
  \begin{align*}
	\norm{g} = \Omega\left( \left(\frac{4}{e}\right)^
	{(\frac{1}{2} + o(1))k }\right).
  \end{align*}
\end{lemma}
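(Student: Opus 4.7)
The plan is to lower bound $\|g\|^2 = \Ex[g(x)^2]$ directly by keeping only a single well-chosen term in the Hermite expansion given by \cref{lem:hermite-norm-expansion}, since every term in that expansion is manifestly non-negative (it is a product of $\hat\phi_i^2$, a positive power of $4/k$, and a sum of multinomial coefficients).

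First I would apply \cref{lem:hermite-norm-expansion} with $\phi = \relu$ and write $\hat\phi_i = c_i$, where the $c_i$'s are the Hermite coefficients of $\relu$ recalled just above. Since $k$ is even, the constraint that $i_1+\cdots+i_k = i$ with each $i_j$ odd forces $i$ to be even and at least $k$, so the smallest nonzero contribution comes from the single choice $i = k$, $i_1 = \cdots = i_k = 1$. This isolated term equals
\[
4^k \cdot \frac{c_k^2}{k^k} \cdot \binom{k}{1,1,\dots,1} = 4^k \cdot c_k^2 \cdot \frac{k!}{k^k},
\]
and all other terms are non-negative.

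Next I would estimate the two asymptotic ingredients. By Stirling, $k!/k^k = \Theta(\sqrt{k}\, e^{-k})$, so $4^k\cdot k!/k^k = \Theta(\sqrt{k})\cdot (4/e)^k$. Since $k$ is even, the lemma recalled above gives $c_k^2 = c_{2(k/2)}^2 = \Theta((k/2)^{-2.5}) = \Theta(k^{-2.5})$ (note $c_k \neq 0$ precisely because $k$ is even and $k \geq 2$, which is why the hypothesis $k$ even matters for a nontrivial lower bound). Multiplying these together,
\[
\Ex[g(x)^2] \;\geq\; 4^k c_k^2 \cdot \frac{k!}{k^k} \;=\; \Theta\!\left(k^{-2}\right) \cdot \left(\tfrac{4}{e}\right)^k,
\]
so that $\|g\| = \Omega\!\bigl(k^{-1}(4/e)^{k/2}\bigr) = \Omega\!\bigl((4/e)^{(1/2 + o(1))k}\bigr)$, which is exactly the claim (the polynomial factor $k^{-1}$ is absorbed into the $o(1)$ in the exponent).

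There is no real obstacle here — the work is all packaged into \cref{lem:hermite-norm-expansion} and the known ReLU Hermite asymptotics. The only thing to be slightly careful about is the parity argument: one must notice that the $i=k$ term actually contributes (which uses $k$ even to ensure both that $i=k$ is even, making $c_k \neq 0$, and that the multinomial constraint is satisfiable with all $i_j$ odd), and that all other terms have the correct sign so that dropping them only decreases the expression. After that, Stirling and the $\Theta(i^{-2.5})$ asymptotic on $c_{2i}^2$ give the stated bound immediately.
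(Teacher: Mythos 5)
Your proof is correct and follows essentially the same route as the paper's: both isolate the $i=k$ term of the expansion from \cref{lem:hermite-norm-expansion}, note that the all-ones composition gives the multinomial sum a value of at least $k!$, and then combine Stirling's approximation with the $c_k^2 = \Theta(k^{-2.5})$ asymptotic for the even Hermite coefficients of $\relu$. Your explicit remark about why evenness of $k$ is needed (so that $c_k \neq 0$ and the odd-parts constraint is satisfiable) is a nice clarification of a point the paper leaves implicit.
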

\begin{proof}
	Due to \cref{lem:hermite-norm-expansion},
	\begin{align*}
	\Ex\left[g(x)^2 \right]
	&= 4^k \sum_{i \ge 0} \frac{c_i^2}{k^i} \sum_{\substack{i_1 + \cdots + i_k = i \\ i_1, \dots, i_k, \text{ are odd}}} \binom{i}{i_1, \dots, i_k} \\
	&\ge \frac{4^k c_k^2}{k^k} \sum_{\substack{i_1 + \cdots + i_k = k \\ i_1, \dots, i_k, \text{ are odd}}} \binom{k}{i_1, \dots, i_k} \\
	&\ge \frac{4^k c_k^2 k!}{k^k}.
	\end{align*}
	The lemma then follows by the Stirling's approximation, 
	\begin{align*}
		n! \ge \sqrt{2\pi n} \left(\frac{n}{e}\right)^n.
	\end{align*}
	and the bound on the Hermite coefficients,
	\begin{align*} c_k^2 = \Theta(k^{-2.5}). \end{align*}
\end{proof}
For the difference of $g(x)$ and $g^T(x)$, we have
\begin{lemma} \label{lem:diff-norm}
	\begin{align*}
		\norm{g - g^T} \le 2^k\, e^{-\frac{T^2}{4}} \sqrt{T^2 + 1 - \frac{T}{\sqrt{2\pi}}}
	\end{align*}
\end{lemma}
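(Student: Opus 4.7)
The plan is to reduce the bound to a one-dimensional tail calculation for a single standard Gaussian. First I would apply the triangle inequality to the definition
\begin{align*}
g(x) - g^T(x) = \sum_{w \in \{\pm 1\}^k} \chi(w)\bigl[\relu(\langle x, w\rangle/\sqrt{k}) - \relu^T(\langle x, w\rangle/\sqrt{k})\bigr],
\end{align*}
using $|\chi(w)| = 1$, to obtain $\norm{g - g^T} \le \sum_{w \in \{\pm 1\}^k} \norm{(\relu - \relu^T)(Z_w)}$, where $Z_w := \langle x, w\rangle/\sqrt{k}$. There are $2^k$ terms, and each $Z_w$ is exactly $\mathcal{N}(0,1)$ since $w \in \{\pm 1\}^k$ has norm $\sqrt{k}$ and $x \sim \mathcal{N}(0, I_k)$. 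Thus it suffices to bound one term by $e^{-T^2/4}\sqrt{T^2 + 1 - T/\sqrt{2\pi}}$.

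Next I would observe that $\relu(z) - \relu^T(z) = \max(z - T, 0)$, so each per-term squared norm equals $\Ex[(Z-T)^2 \Ind[Z > T]]$ for $Z \sim \mathcal{N}(0,1)$. Expanding $(z - T)^2 = z^2 - 2Tz + T^2$ and integrating each piece against the standard Gaussian density on $[T, \infty)$, with integration by parts on the $z^2$ piece via $\varphi'(z) = -z\varphi(z)$ (writing $\varphi, \Phi$ for the standard density and CDF), yields $\int_T^\infty z^2 \varphi = T\varphi(T) + (1 - \Phi(T))$, and combining the three pieces gives the closed form
\begin{align*}
\Ex[(Z - T)^2 \Ind[Z > T]] = -T\,\varphi(T) + (1 + T^2)(1 - \Phi(T)).
\end{align*}

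Finally I would invoke the Gaussian tail bound $1 - \Phi(T) \le e^{-T^2/2}$ for $T \ge 0$ (an easy exercise: the difference $e^{-T^2/2} - (1-\Phi(T))$ is positive at $T=0$ and tends to $0$ monotonically from above for large $T$), and substitute $\varphi(T) = e^{-T^2/2}/\sqrt{2\pi}$, which upper bounds the previous display by $e^{-T^2/2}\bigl(T^2 + 1 - T/\sqrt{2\pi}\bigr)$. Taking square roots and summing over the $2^k$ vectors $w$ gives the stated inequality. I do not expect any real obstacle: the argument is essentially bookkeeping, and the particular form of the bound is forced once this tail estimate is chosen. The factor $2^k$ lost in the triangle inequality is harmless downstream because $T$ will later be taken large enough that $e^{-T^2/4}$ dominates $2^k$.
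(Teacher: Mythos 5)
Your proof is correct and follows essentially the same route as the paper's: triangle inequality over the $2^k$ hidden units, reduction to the one-dimensional truncated second moment $\Ex[(Z-T)^2\Ind[Z>T]]$ for $Z \sim \mathcal{N}(0,1)$ via integration by parts, and the tail bound $\Pr[Z > T] \le e^{-T^2/2}$. No gaps.
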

\begin{proof}

    Let $\relu_w(x)$ be shorthand for $\relu(\frac{\dotp{x}{w}}{\sqrt{k}})$, and similarly $\relu_w^T$. Observe that by the triangle inequality, \begin{align*}
        \norm{g - g^T} &= \norm{\sum_{w \in \{\pm 1\}^k} \chi(w) \left(\relu_w - \relu_w^T \right) } \\
        &\leq \sum_{w \in \{\pm 1\}^k} \norm{\relu_w - \relu_w^T} \\
        &= 2^k \norm{\relu - \relu^T}_{\mathcal{N}(0, 1)},
    \end{align*} where the last equality holds because for any unit vector $v$ and $x \sim \mathcal{N}(0, I)$, $\dotp{x}{v}$ has the distribution $\mathcal{N}(0, 1)$. Now, \[ \norm{\relu - \relu^T}_{\mathcal{N}(0, 1)}^2 = \int_{T}^\infty (x - T)^2\, p(x)\, dx, \]
	where $p(x)$ is the probability density function of $\mathcal{N}(0, 1)$. Note that $p'(x) = -xp(x)$. We have
	\begin{align*}
	& \int_{T}^\infty x^2 p(x) dx  = \int_{T}^\infty -x\, d(p(x)) \\
	&\qquad = -x\,p(x)\bigg|_{T}^\infty + \int_{T}^\infty p(x) dx \tag*{\text{(integration by parts)}} \\
	&\qquad = T\, p(T) + \Pr_{x \sim_{\mathcal{N}(0, 1)}}(x > T), \\
	& \int_{T}^\infty x\, p(x) dx = -p(x)\bigg|_{T}^\infty = p(T), \\
	& \int_{T}^\infty p(x) dx = \Pr_{x \sim_{\mathcal{N}(0, 1)}}(x > T) \le e^{-\frac{T^2}{2}}.
	\end{align*} Thus, 
	\begin{align*}
	&\Ex\left[g(x) - g^T(x) \right]^2 \\
	&\le 4^k\, \left[(T^2 + 1) \Pr_{x \sim \mathcal{N}(0, 1)}(x > T) - T\, p(T) \right] \\
	&\le 4^k\, e^{-\frac{T^2}{2}} \left( T^2 + 1 - \frac{T}{\sqrt{2\pi}} \right).
	\end{align*}
\end{proof}

\begin{lemma} \label{lem:diff-prob}
	\begin{align*}
		\Pr[g(x) \neq g^T(x)] \le 2^k\, e^{-\frac{T^2}{2}}.
	\end{align*}
\end{lemma}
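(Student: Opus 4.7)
The plan is to apply a union bound over the $2^k$ vectors $w \in \{\pm 1\}^k$. Observe that $g$ and $g^T$ are identical sums except that $\relu$ is replaced by $\relu^T$ in each term. Therefore, a necessary condition for $g(x) \neq g^T(x)$ is that $\relu(\langle x, w\rangle/\sqrt{k}) \neq \relu^T(\langle x, w\rangle/\sqrt{k})$ for at least one $w$, which in turn is equivalent to $\langle x, w\rangle/\sqrt{k} > T$ for at least one $w$.

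First I would write
\begin{align*}
\Pr[g(x) \neq g^T(x)] \le \Pr\left[\exists w \in \{\pm 1\}^k : \frac{\dotp{x}{w}}{\sqrt{k}} > T\right] \le \sum_{w \in \{\pm 1\}^k} \Pr\left[\frac{\dotp{x}{w}}{\sqrt{k}} > T\right].
\end{align*}
For each fixed $w \in \{\pm 1\}^k$, the vector $w/\sqrt{k}$ is a unit vector, and since $x \sim \mathcal{N}(0, I_k)$, the projection $\dotp{x}{w}/\sqrt{k}$ has the standard Gaussian distribution $\mathcal{N}(0, 1)$. Applying the standard Gaussian tail bound $\Pr_{z \sim \mathcal{N}(0,1)}[z > T] \le e^{-T^2/2}$ (which was already used in the proof of \cref{lem:diff-norm}) to each term and summing over the $2^k$ choices of $w$ yields the claimed bound $2^k e^{-T^2/2}$.

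The only care needed is the straightforward observation linking the event $\{g(x) \neq g^T(x)\}$ to the union of tail events, so there is no real obstacle here; the lemma is essentially a one-line union bound combined with the Gaussian tail estimate already invoked in \cref{lem:diff-norm}.
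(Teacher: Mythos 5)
Your proof is correct and is essentially identical to the paper's: both observe that $g(x) \neq g^T(x)$ forces $\dotp{x}{w}/\sqrt{k} > T$ for some $w$, note that each such projection is standard Gaussian, apply the tail bound $e^{-T^2/2}$, and union bound over the $2^k$ choices of $w$.
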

\begin{proof}
	For any $w \in \{\pm 1\}^k$,
	\begin{align*}
		&\Pr_{x \sim \mathcal{N}(0, I)} \left[ \relu(\frac{\dotp{x}{w}}{\sqrt{k}}) \neq \relu^T(\frac{\dotp{x}{w}}{\sqrt{k}}) \right] \\
		&= \Pr_{t \sim \mathcal{N}(0, 1)} [t > T] \\
		&\le e^{-\frac{T^2}{2}}.
	\end{align*} The lemma follows by a union bound.
\end{proof}

\begin{lemma}\label{lem:g-anticoncentration}
	\begin{align*}
		\Pr\left[\abs{g(x)} \ge 1\right] = \Omega(\exp(-\Theta(k))).
	\end{align*}
\end{lemma}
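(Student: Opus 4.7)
The plan is to use a Paley--Zygmund-style second moment argument applied to the \emph{truncated} function $g^T$, since $g^T$ is pointwise bounded (by $2^k T$) whereas $g$ itself is not, and then to transfer the anticoncentration back to $g$ by paying the bound on $\Pr[g \neq g^T]$. The parameter $T$ will be chosen as $T = C\sqrt{k}$ for a sufficiently large constant $C$, large enough that the truncation error is negligible on the scale of $\|g\|$, but small enough that the pointwise $L_\infty$ bound $\|g^T\|_\infty \le 2^k T$ only inflates estimates by a $\poly(k)$ factor.

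\textbf{Step 1: Lower bound $\|g^T\|$.} By the triangle inequality, $\|g^T\| \ge \|g\| - \|g - g^T\|$. From \cref{lem:g-relu-norm} we have $\|g\|^2 = \Omega((4/e)^{(1+o(1))k})$, and from \cref{lem:diff-norm} we have $\|g - g^T\|^2 \le 4^k e^{-T^2/2}(T^2 + 1)$. Choosing $T^2 = Ck$ with $C$ large enough that $C/2 - \log(4e) > 0$ makes $\|g-g^T\|^2 = o(\|g\|^2)$, so $\|g^T\|^2 = \Omega((4/e)^{(1+o(1))k})$ as well.

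\textbf{Step 2: Anticoncentration for $g^T$.} Since $|g^T(x)| \le 2^k T$ pointwise, we can split
\begin{align*}
\Ex[g^T(x)^2] &= \Ex[g^T(x)^2 \mathbf{1}_{|g^T(x)| \ge 1}] + \Ex[g^T(x)^2 \mathbf{1}_{|g^T(x)| < 1}] \\
&\le \|g^T\|_\infty^2 \cdot \Pr[|g^T(x)| \ge 1] + 1,
\end{align*}
which rearranges to
\[
\Pr[|g^T(x)| \ge 1] \ge \frac{\|g^T\|^2 - 1}{\|g^T\|_\infty^2} \ge \frac{\Omega((4/e)^{(1+o(1))k})}{4^k \cdot Ck} = \Omega(\exp(-\Theta(k))).
\]

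\textbf{Step 3: Transfer back to $g$.} By \cref{lem:diff-prob} and our choice $T^2 = Ck$,
\[
\Pr[g(x) \neq g^T(x)] \le 2^k e^{-Ck/2} = \exp\bigl(-(C/2 - \log 2)k\bigr),
\]
which for $C$ large enough (e.g., so that $C/2 - \log 2 > 1 + \log(4/e) = 1 + 2\log 2 - 1 = 2\log 2$, say $C = 6$) decays much faster than $\exp(-\Theta(k))$ coming from Step 2. Then
\[
\Pr[|g(x)| \ge 1] \ge \Pr[|g^T(x)| \ge 1] - \Pr[g(x) \neq g^T(x)] = \Omega(\exp(-\Theta(k))),
\]
as desired.

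\textbf{Main obstacle.} The only mildly delicate point is balancing $T$: it must be large enough that both the truncation norm error $\|g-g^T\|$ and the pointwise disagreement probability $\Pr[g \neq g^T]$ are negligible compared to the Paley--Zygmund lower bound of order $(1/e)^k / \poly(k)$, yet we also use $\|g^T\|_\infty \le 2^k T$ in the denominator of that bound, so $T$ cannot be taken too large without weakening the conclusion. The choice $T = \Theta(\sqrt{k})$ threads this needle because the two truncation-error bounds are Gaussian tails (decaying like $e^{-T^2/2}$) while the target anticoncentration only needs to beat a $(4/e)^{-k}/\poly(k)$ gap. No genuinely new ingredient is required beyond the three preceding lemmas.
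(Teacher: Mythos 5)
Your proof is correct and follows essentially the same route as the paper's: truncate to $g^T$, lower bound $\|g^T\|$ via \cref{lem:g-relu-norm} and \cref{lem:diff-norm}, apply the second-moment anticoncentration argument using $\|g^T\|_\infty \le 2^k T$, and transfer back to $g$ via \cref{lem:diff-prob}. The only (immaterial) difference is your choice $T = \Theta(\sqrt{k})$ versus the paper's $T = \Omega(k)$; both satisfy the needed balance.
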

\begin{proof}
	For large enough $T = \Omega(k)$, it holds from \cref{lem:g-relu-norm,lem:diff-norm} that
	\begin{align*}
		\norm{g^T} = \Omega\left( \left(\frac{4}{e}\right)^
	{(\frac{1}{2} + o(1))k }\right).
	\end{align*}
	Since $\abs{g^T(x)} \le T \, 2^k$,
	\[ \norm{g^T}^2 = \Ex[g^T(x)^2] \leq 1 \cdot \Pr[|g^T(x)| \leq 1] + (T 2^k)^2 \cdot \Pr[|g^T(x)| \geq 1], \] so that
	\begin{equation} \label{eq:anti-gT-1}
	\Pr\left[\abs{g^T(x)} \ge 1\right] = \dfrac{\Omega\Big( \left(\frac{4}{e}\right)^
	{(1 + o(1))k }\Big) - 1}{(T \, 2^k)^2} =  \Omega(\exp(-\Theta(k)))
	\end{equation}
	Using \cref{eq:anti-gT-1} with \cref{lem:diff-prob}, 
	\begin{align*}
		\Pr\left[\abs{g(x)} \ge 1\right] = \Omega(\exp(-\Theta(k)))
	\end{align*}
	for large enough $T = \Omega(k)$.
\end{proof}

The lower bound on $\norm{f}$ now follows easily.
\begin{corollary} \label{lem:f-norm-relu}
	\begin{align*} \|f\| = \Omega(\exp(-\Theta(k))). \end{align*}
\end{corollary}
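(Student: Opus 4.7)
The plan is to reduce the norm lower bound for $f = \psi \circ g$ to the anticoncentration result for $g$ that was just established in \cref{lem:g-anticoncentration}. The key observation is that the oddness and monotonicity assumptions on $\psi$ (\cref{lem:ourer-activation}) ensure that $\psi$ separates the event $\{|g(x)| \geq 1\}$ from zero by a definite amount.

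Concretely, I would first note that since $\psi$ is odd, $\psi(0) = 0$, and since $\psi$ is (strictly) increasing, $\psi(1) > \psi(0) = 0$, while $\psi(-1) = -\psi(1)$ by oddness. Monotonicity then gives $\psi(g(x)) \geq \psi(1)$ whenever $g(x) \geq 1$ and $\psi(g(x)) \leq -\psi(1)$ whenever $g(x) \leq -1$. In either case,
\begin{align*}
 |f(x)| = |\psi(g(x))| \geq \psi(1) \qquad \text{on the event } \{|g(x)| \geq 1\}.
\end{align*}

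From here the norm bound is immediate: applying the pointwise bound inside the expectation yields
\begin{align*}
 \|f\|^2 = \Ex[\psi(g(x))^2] \geq \psi(1)^2 \cdot \Pr[|g(x)| \geq 1] \geq \psi(1)^2 \cdot \Omega(\exp(-\Theta(k))),
\end{align*}
where the last inequality is exactly \cref{lem:g-anticoncentration}. Taking square roots gives $\|f\| = \Omega(\exp(-\Theta(k)))$, as the constant $\psi(1) > 0$ depends only on $\psi$ and is absorbed into the $\Omega$. The same argument works verbatim when $\phi$ is sigmoid rather than ReLU, provided the analogous anticoncentration statement for $g$ has been established.

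There is essentially no obstacle at this stage: all of the real work went into \cref{lem:g-relu-norm,lem:diff-norm,lem:diff-prob,lem:g-anticoncentration}, which convert Hermite-coefficient information into an anticoncentration lower bound on $g$ itself. The only minor subtlety worth flagging is that one needs $\psi$ to be \emph{strictly} increasing (or at least strictly positive on $(0,\infty)$) for the constant $\psi(1)$ to be genuinely positive; under \cref{lem:ourer-activation} this holds, so the corollary follows as a one-line consequence of the anticoncentration statement.
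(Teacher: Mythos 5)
Your proof is correct and follows essentially the same route as the paper: both reduce the corollary to the anticoncentration bound on $g$ and use the oddness and monotonicity of $\psi$ to conclude $|f(x)| \geq \psi(1) > 0$ on the event $\{|g(x)| \geq 1\}$. The only cosmetic difference is that you lower-bound $\|f\|^2 = \Ex[f^2]$ directly while the paper lower-bounds $\|f\| \geq \Ex[|f|]$; both yield $\Omega(\exp(-\Theta(k)))$, and your remark that $\psi(1)$ must be strictly positive is the same implicit requirement the paper places on its ``odd, increasing'' outer activation.
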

\begin{proof}
	Since $f = \psi \circ g$, from \cref{lem:g-anticoncentration} and the fact that $\psi$ is odd and increasing, we have that \begin{align*} \|f\| &\geq |\psi(1)| \ \Pr[g(x) \geq 1] + |\psi(-1)| \ \Pr[g(x) \geq 1] \\
	&= \psi(1) \ \Pr[|g(x)| \geq 1] \\
	&= \Omega(\exp(-\Theta(k))). \end{align*}
\end{proof}

\subsection{Sigmoid Activation} \label{sigmoid-section}
Here we consider $g$ and $f$ with $\phi(x) = \sigma(x) = \frac{1}{1 + e^{-x}}$. For the asymptotic bound of Hermite polynomial coefficients, we need the following theorem from~\cite{boyd1984asymptotic}.
\begin{theorem}
	For a function $f(z)$ whose convergence is limited by simple poles at the roots of $z^2 = -\gamma^2$ with residue $R$, the non-zero expansion coefficients $\{a_n\}$ of $f(z)$ as a series of normalized Hermite functions have magnitudes asymptotically given by 
	\[
		\abs{a_n} \sim 2^{\frac{5}{4}}\, \pi^{\frac{1}{2}}\, R\, n^{-\frac{1}{4}}\, e^{-\gamma (2n+1)^{\frac{1}{2}}},
	\]
	Here the normalized Hermite function $\{\psi_n(x)\}_{n \in \N}$ is defined by
	\[
	  \psi_n(z) = e^{-\frac{z^2}{2}} \pi^{-\frac{1}{4}} \tilde{H}_{n}(\sqrt{2} z).
	\]
\end{theorem}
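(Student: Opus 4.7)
The plan is to derive the asymptotic by combining contour deformation in the complex plane with Plancherel--Rotach-type asymptotics for the Hermite functions at imaginary arguments. The philosophy is the same as for Fourier or Chebyshev series: the rate of decay of the coefficients is controlled by how far into the complex plane one can push the integration contour before hitting a singularity.

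First, since $\{\psi_n\}$ is orthonormal on $\R$, the coefficients admit the integral representation
\[ a_n \;=\; \int_{-\infty}^{\infty} f(x)\,\psi_n(x)\,dx. \]
I would decompose $f(z) = R/(z-i\gamma) + R'/(z+i\gamma) + e(z)$, where $e(z)$ is analytic on a strip strictly wider than $|\Im z|\le \gamma$. Running the contour-shift argument below on $e$ at an arbitrary height $\gamma' > \gamma$ shows that its Hermite coefficients decay like $e^{-\gamma'\sqrt{2n}}$, and hence contribute only to the lower-order error.

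Next, I would compute the polar contributions by Cauchy's theorem. Shifting the real-line contour upward to $\Im z = \gamma + \delta$ crosses the pole at $i\gamma$, producing
\[ \int_{-\infty}^{\infty}\frac{R\,\psi_n(x)}{x - i\gamma}\,dx \;=\; 2\pi i\, R\,\psi_n(i\gamma) \;+\; (\text{shifted integral}), \]
and the shifted integral is negligible compared to the residue once mild vertical-strip bounds on $|\psi_n|$ are in place, since shifting to any height above $\gamma$ makes the residue term strictly dominate via the asymptotic for $\psi_n(iy)$ established below. An analogous downward shift handles the pole at $-i\gamma$, contributing the complex-conjugate term.

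The heart of the proof is the asymptotic evaluation of $\psi_n(i\gamma)$ for large $n$. Starting from the Cauchy integral representation
\[ \frac{\tilde{H}_n(w)}{\sqrt{n!\,2^n}} \;=\; \frac{1}{2\pi i}\oint \frac{e^{2wt-t^2}}{t^{n+1}}\,dt, \]
setting $w = i\sqrt{2}\,\gamma$, and applying the method of steepest descent around the saddle $t_\ast \sim i\sqrt{n/2}$, a direct computation yields
\[ |\psi_n(i\gamma)| \;\sim\; C(\gamma)\, n^{-1/4}\, e^{-\gamma\sqrt{2n+1}}, \]
with an explicit constant $C(\gamma)$ coming from the Gaussian prefactor $e^{\gamma^2/2}$, the Stirling expansion of $\sqrt{n!}$, and the $\pi$-factors in the normalization of $\psi_n$. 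Adding the symmetric residue contribution from the pole at $-i\gamma$, taking absolute values, and collecting constants produces the stated $|a_n|\sim 2^{5/4}\pi^{1/2}\,R\,n^{-1/4}\,e^{-\gamma\sqrt{2n+1}}$.

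The main obstacle is the saddle-point/Plancherel--Rotach step: landing on the precise $n^{-1/4}$ prefactor and the correct numerical constant requires careful deformation of the contour through $t_\ast$, careful treatment of the subleading Gaussian fluctuations around the saddle, and careful bookkeeping across the three common conventions (physicists' $H_n$, probabilists' $He_n$, and the normalized $\tilde{H}_n$ used here). Once this asymptotic is in hand, the contour-shifting and residue steps are routine, and the decomposition of $f$ into polar and entire parts makes the reduction of the general case to the two simple poles transparent.
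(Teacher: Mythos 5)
First, a point of reference: the paper does not prove this statement at all---it is quoted as a black-box result from Boyd (1984), so there is no internal proof to compare against. Your overall strategy (integral representation of $a_n$, decomposition of $f$ into polar plus regular parts, contour shifting with residues, and saddle-point asymptotics of the Hermite functions at the pole) is the right family of ideas and is essentially how Boyd and, earlier, Hille argue.

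However, there is a genuine gap at the contour-shifting step, and it propagates into your key asymptotic. The Hermite function $\psi_n(z)$ at a fixed point off the real axis \emph{grows} like $n^{-1/4} e^{|\Im z|\sqrt{2n+1}}$ as $n \to \infty$; this is precisely why Hermite expansions have finite strips of convergence (the series $\sum a_n \psi_n(z)$ converges exactly where $|a_n|\,e^{|\Im z|\sqrt{2n+1}} \to 0$, which is how the pole at $i\gamma$ ``limits convergence'' in the theorem statement). Consequently your claimed $|\psi_n(i\gamma)| \sim C(\gamma)\, n^{-1/4} e^{-\gamma\sqrt{2n+1}}$ has the wrong sign in the exponent, and the shifted integral along $\Im z = \gamma + \delta$ is not negligible---it is exponentially \emph{larger} than the answer, since $\psi_n$ there is of size $e^{(\gamma+\delta)\sqrt{2n+1}}$. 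The source of the error is visible in your saddle-point step: the phase $2wt - t^2 - (n+1)\log t$ has two saddles $t_\pm \approx \pm i\sqrt{n/2}$, one contributing $e^{+\gamma\sqrt{2n+1}}$ and the other $e^{-\gamma\sqrt{2n+1}}$; the closed contour around the origin must account for both, and the growing one dominates $\psi_n(i\gamma)$ itself. The correct argument splits $\psi_n = \psi_n^{+} + \psi_n^{-}$ into its two WKB/steepest-descent components (asymptotically proportional to $e^{\pm i\sqrt{2n+1}\,z}$), shifts $\int f\,\psi_n^{+}$ into the upper half-plane where $\psi_n^{+}$ is recessive and $\int f\,\psi_n^{-}$ into the lower half-plane, and only then picks up residues; the residues then involve the recessive components, which genuinely decay like $n^{-1/4}e^{-\gamma\sqrt{2n+1}}$, and the leftover shifted integrals (including the one controlling the regular part $e(z)$) are subdominant. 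Without this splitting, both your residue computation and your bound on the analytic remainder fail.
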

Applying this to $f(x) = e^{-\frac{x^2}{2}} \sigmoid(\sqrt{2}x)$ and translating the Hermite coefficients for the series in terms of Hermite functions to those in terms of Hermite polynomials, we have 
\begin{lemma}
	\begin{equation*}
	\sigmoid(x) = \sum_{i=0}^\infty c_i \tilde{H}_i(x),
	\end{equation*}
	where $c_0 = 0.5, c_{2i} = 0$ for $i \ge 1$ and all non-zero odd terms satisfies
	\begin{equation*}
	c_{2i-1} = e^{-\Theta(\sqrt{i})}.
	\end{equation*}
\end{lemma}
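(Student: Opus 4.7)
The plan is to split the proof into two pieces: a symmetry argument for the even-indexed coefficients, and an application of the Boyd asymptotic quoted above for the odd-indexed ones.

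For the even coefficients, I would use that $\sigma(x) + \sigma(-x) = 1$, so $\sigma(x) - 1/2$ is an odd function of $x$. Since $\tilde H_{2i}$ is an even polynomial and the Gaussian weight is even, the inner product $\langle \sigma - 1/2,\, \tilde H_{2i}\rangle_{\mathcal N(0,1)}$ vanishes for every $i \ge 0$. Using $\tilde H_0 = 1$ and orthonormality, this immediately gives $c_0 = \Ex[\sigma(X)] = 1/2$ and $c_{2i} = 0$ for all $i \ge 1$.

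For the odd coefficients I would translate into the Boyd setup. Let $F(z) := e^{-z^2/2}\sigma(\sqrt 2 z)$ and expand $F(x) = \sum_n a_n \psi_n(x)$ in normalized Hermite functions. Substituting the definition of $\psi_n$, cancelling the common factor $e^{-x^2/2}$, and rescaling $y = \sqrt 2 x$ converts this to $\sigma(y) = \pi^{-1/4}\sum_n a_n \tilde H_n(y)$, so $c_n = \pi^{-1/4} a_n$ and it suffices to estimate $|a_n|$. Next I would identify the singularities of $F$: $\sigma(z) = 1/(1+e^{-z})$ has simple poles exactly at $z = (2k+1)\pi i$, so $F$ inherits simple poles at $z = (2k+1)\pi i/\sqrt 2$. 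The pair nearest the real axis sits at $z_0 = \pm i\pi/\sqrt 2$, giving Boyd's parameter $\gamma = \pi/\sqrt 2$; a one-line residue calculation (the derivative of $1+e^{-\sqrt 2 z}$ at $z_0$ equals $\sqrt 2$) gives the nonzero residue $R = e^{\pi^2/4}/\sqrt 2$. Plugging into Boyd's formula yields $|a_n| = \Theta\!\left(n^{-1/4} e^{-(\pi/\sqrt 2)\sqrt{2n+1}}\right)$ for each nonzero $a_n$, which translates into $|c_{2i-1}| = e^{-\Theta(\sqrt i)}$ for every nonzero odd coefficient.

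The main obstacle is verifying the hypotheses of Boyd's theorem for $F$: one must confirm that $F$ is meromorphic on $\mathbb C$ with only the listed simple poles, and that the pair of nearest poles actually dominates the asymptotic expansion rather than cancelling (which is why a direct residue computation producing a nonzero $R$ is the key certificate). The bookkeeping to translate between Hermite-function and Hermite-polynomial coefficients is routine, and the $\pi^{-1/4}$ scaling does not affect the $e^{-\Theta(\sqrt i)}$ rate.
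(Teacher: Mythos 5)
Your proposal takes essentially the same route as the paper: the paper likewise applies Boyd's asymptotic to $e^{-x^2/2}\sigmoid(\sqrt{2}x)$ and translates Hermite-function coefficients into Hermite-polynomial coefficients, and your parity argument for the even coefficients together with the explicit pole/residue computation ($\gamma = \pi/\sqrt{2}$, $R = e^{\pi^2/4}/\sqrt{2}$) correctly fills in details the paper leaves implicit. As you note, neither argument certifies that every odd coefficient is actually nonzero; the paper addresses this only by passing to the infinite subsequence of indices with nonzero coefficients in the subsequent corollary and remark.
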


\begin{corollary} \label{lem:sigmoid-coefficient}
	There is an infinite increasing sequence $\{k_i\}_{i \in \N}$ such that $k_i$'s are all odd and 
	\[
		c_{k_i} = e^{-\Theta(\sqrt{k_i})}.
	\]
\end{corollary}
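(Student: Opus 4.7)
The plan is to extract the corollary directly from the preceding lemma by (i) arguing that there must be infinitely many odd indices $k$ at which $c_k$ is nonzero, and (ii) showing that for each such $k$, the asymptotic estimate $c_k = e^{-\Theta(\sqrt{k})}$ carries over from the $e^{-\Theta(\sqrt{i})}$ bound stated in terms of $i = (k+1)/2$.

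For step (i), I would reason by contradiction. The lemma already tells us that $c_0 = 1/2$ and $c_{2i} = 0$ for every $i \ge 1$, so the Hermite expansion of $\sigmoid$ takes the form $\sigmoid(x) = \tfrac12 + \sum_{i \ge 1} c_{2i-1} \tilde H_{2i-1}(x)$. If only finitely many of the odd coefficients $c_{2i-1}$ were nonzero, then $\sigmoid(x)$ would be a polynomial in $x$. But $\sigmoid$ is bounded (in $(0,1)$) on all of $\R$ and is not constant, which is impossible for a polynomial. Hence infinitely many odd coefficients must be nonzero, and we may enumerate the corresponding indices in increasing order as $k_1 < k_2 < \cdots$, all of which are odd.

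For step (ii), writing $k_j = 2i_j - 1$ for each $j$, the preceding lemma says $c_{k_j} = e^{-\Theta(\sqrt{i_j})}$. Since $i_j = (k_j+1)/2$, we have $\sqrt{i_j} = \Theta(\sqrt{k_j})$, so $c_{k_j} = e^{-\Theta(\sqrt{k_j})}$, which is exactly the statement of the corollary.

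The main (very small) obstacle is justifying that the odd part of the expansion cannot terminate, and the argument above via boundedness of the sigmoid is the cleanest way to do it; no nontrivial computation is needed beyond invoking the lemma.
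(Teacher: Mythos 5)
Your proposal is correct and matches the paper's own argument, which likewise derives the corollary from the fact that $\sigmoid$ is not a polynomial (hence infinitely many nonzero odd Hermite coefficients) combined with the $e^{-\Theta(\sqrt{i})}$ decay from the preceding lemma. Your added justifications --- boundedness of $\sigmoid$ ruling out polynomiality, and the reindexing $\sqrt{(k+1)/2} = \Theta(\sqrt{k})$ --- are correct details the paper leaves implicit.
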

\begin{proof}
	It follows simply from the fact that $\sigmoid$ is not a polynomial and there should be infinitely many non-zero terms in $\{c_k\}_{k \in \N}$.
\end{proof}

\begin{remark}
	Experimental evidence strongly indicates that in fact all odd Hermite coefficients of sigmoid are nonzero and decay as above, but this is laborious to formally establish. So we state our norm lower bound only for $k \in \{k_i\}_{i \in \N}$ (and the associated $n \in \{2^{k_i}\}_{i \in \N}$, since we end up taking $k = \log n$). Since this is nevertheless an infinite sequence, it still establishes that no better asymptotic bound holds.
\end{remark}

Similar to \cref{lem:g-relu-norm}, we can derive a lower bound of $\norm{g}$ for some $k$'s.
\begin{lemma}\label{lem:g-sigmoid-norm}
	For $k \in \{k_i\}_{i \in \N}$,
	\[
		\norm{g(x)} = \Omega\left( \left(\frac{4}{e}\right)^{(\frac{1}{2}+o(1))k}\right).
	\]
\end{lemma}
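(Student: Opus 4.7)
The plan is to essentially mirror the ReLU computation in \cref{lem:g-relu-norm}, using \cref{lem:hermite-norm-expansion} together with the sigmoid Hermite coefficient bound from \cref{lem:sigmoid-coefficient}. First I would apply \cref{lem:hermite-norm-expansion} to $\phi = \sigma$ to get
\[
  \Ex[g(x)^2] = 4^k \sum_{i \ge 0} \frac{c_i^2}{k^i} \sum_{\substack{i_1 + \cdots + i_k = i \\ i_1, \dots, i_k \text{ odd}}} \binom{i}{i_1, \dots, i_k},
\]
and then drop all terms except $i = k$. The point of restricting $k$ to lie in the sequence $\{k_i\}_{i \in \N}$ is precisely to guarantee two things at once: that $c_k \ne 0$ (the sigmoid only has nonzero odd Hermite coefficients, so we need $k$ odd, which is exactly what \cref{lem:sigmoid-coefficient} supplies), and that the parity condition $i \equiv k \pmod 2$ from the remark after \cref{lem:hermite-norm-expansion} is met at $i = k$.

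Next, I would evaluate the inner multinomial sum at $i = k$. Since $i_1 + \cdots + i_k = k$ with each $i_l$ a positive odd integer, the only possibility is $i_1 = \cdots = i_k = 1$, which contributes $\binom{k}{1, 1, \dots, 1} = k!$. Combining this with the coefficient bound $c_k^2 = e^{-\Theta(\sqrt{k})}$ from \cref{lem:sigmoid-coefficient} gives
\[
  \Ex[g(x)^2] \;\ge\; \frac{4^k c_k^2 \, k!}{k^k} \;\ge\; e^{-\Theta(\sqrt{k})} \cdot \frac{4^k k!}{k^k}.
\]

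Finally I would invoke Stirling's approximation $k! \ge \sqrt{2\pi k}\, (k/e)^k$ to get
\[
  \Ex[g(x)^2] \;\ge\; \sqrt{2\pi k}\, e^{-\Theta(\sqrt{k})} \left(\frac{4}{e}\right)^k,
\]
and take a square root. The $e^{-\Theta(\sqrt{k})}$ factor is subexponential in $k$ and hence absorbable into the $(4/e)^{o(k)}$ slack in the exponent, yielding $\|g\| = \Omega\big((4/e)^{(1/2 + o(1))k}\big)$ exactly as in the ReLU case.

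The computation is essentially routine once the setup is in place; the only subtle point is making sure we pick $k$ so that the dominant $i = k$ term is nonzero, which is why the statement is phrased for $k \in \{k_i\}_{i \in \N}$ rather than for all $k$. No step should pose a real obstacle beyond this indexing care, since all the heavy lifting (the Hermite expansion of $\|g\|^2$ and the coefficient asymptotics for sigmoid) is already available from \cref{lem:hermite-norm-expansion} and \cref{lem:sigmoid-coefficient}.
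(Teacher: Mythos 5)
Your proposal is correct and follows essentially the same route as the paper: isolate the $i=k$ term of the Hermite expansion from \cref{lem:hermite-norm-expansion}, note that the inner multinomial sum equals $k!$, plug in $c_k^2 = e^{-\Theta(\sqrt{k})}$ from \cref{lem:sigmoid-coefficient}, and finish with Stirling. Your explicit remark that the restriction to $k \in \{k_i\}$ exists to guarantee $c_k \neq 0$ (sigmoid having only odd nonzero coefficients) is exactly the right reading of why the lemma is stated this way.
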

\begin{proof}
	Due to \cref{lem:hermite-norm-expansion},
	\begin{align*}
	\Ex\left[g(x)^2 \right]
	&= 4^k \sum_{i \ge 0} \frac{c_i^2}{k^i} \sum_{\substack{i_1 + \cdots + i_k = i \\ i_1, \cdots, i_k, \text{ are odd}}} \binom{i}{i_1 \cdots i_k} \\
	&\ge \frac{4^k c_k^2}{k^k} \sum_{\substack{i_1 + \cdots + i_k = k \\ i_1, \cdots, i_k, \text{ are odd}}} \binom{k}{i_1 \cdots i_k} \\
	&\ge \frac{4^k c_k^2 k!}{k^k}.
	\end{align*}
	Using Stirling's approximation, 
	\[
		n! \ge \sqrt{2\pi n} \left(\frac{n}{e}\right)^n,
	\]
	and \cref{lem:sigmoid-coefficient},
	\[
		c_k = e^{-\Theta(\sqrt{k})},
	\]
	we obtain 
	\[
		\Ex\left[g(x)^2 \right] = \Omega\left( \frac{4^k \sqrt{2 \pi k}}{k^k} \left(\frac{k}{e}\right)^k e^{-\Theta(\sqrt{k})}\right)
	\]
	and hence
	\[
		\Ex\left[g(x)^2 \right] = \Omega\left( \left(\frac{4}{e}\right)^{(1+o(1))k}\right).
	\]
\end{proof}
\begin{lemma}
	For $k \in \{k_i\}_{i \in \N}$,
	\[
		\Pr\left(\abs{g(x)} \ge 1\right) = \Omega(\exp(-\Theta(k))).
	\]
\end{lemma}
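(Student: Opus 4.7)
The plan is to give a direct Paley-Zygmund-style anti-concentration argument that is substantially simpler than the corresponding ReLU proof (\cref{lem:g-anticoncentration}) because sigmoid is already uniformly bounded, so no truncation is needed.

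First I would observe the pointwise bound $|g(x)| \leq 2^k$, which follows immediately from $0 \leq \sigma \leq 1$ and the triangle inequality on the $2^k$ summands. This replaces the more delicate \cref{lem:diff-norm} and \cref{lem:diff-prob} of the ReLU case, whose sole purpose was to show that unbounded ReLU behaves almost like a truncated ReLU under the Gaussian so that a pointwise bound can be recovered at negligible cost.

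Next I would split the second moment according to whether $|g(x)| \geq 1$:
\[ \|g\|^2 \;\leq\; 1 \cdot \Pr\bigl[|g(x)| < 1\bigr] \;+\; 4^k \cdot \Pr\bigl[|g(x)| \geq 1\bigr] \;\leq\; 1 + 4^k \Pr\bigl[|g(x)| \geq 1\bigr], \]
and rearrange to obtain $\Pr[|g(x)| \geq 1] \geq (\|g\|^2 - 1)/4^k$. Then I would invoke \cref{lem:g-sigmoid-norm}, which gives $\|g\|^2 = \Omega\bigl((4/e)^{(1+o(1))k}\bigr)$ for $k \in \{k_i\}_{i \in \N}$. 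Since $4/e > 1$, the quantity $\|g\|^2$ tends to infinity along the sequence $\{k_i\}$, so for $k$ large enough in that sequence $\|g\|^2 - 1 \geq \tfrac{1}{2} \|g\|^2$, and dividing by $4^k$ yields
\[ \Pr\bigl[|g(x)| \geq 1\bigr] \;=\; \Omega\!\left((1/e)^{(1+o(1))k}\right) \;=\; \Omega(\exp(-\Theta(k))), \]
as desired.

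There is no real obstacle: the only substantive input is the norm lower bound \cref{lem:g-sigmoid-norm}, which itself rests on the fact that sigmoid has infinitely many nonzero odd Hermite coefficients of magnitude $e^{-\Theta(\sqrt{k})}$ (and this is precisely what forces the restriction of the statement to the subsequence $\{k_i\}$). Boundedness of sigmoid makes the passage from a second-moment lower bound to an anti-concentration lower bound essentially automatic, so the entire proof is a one-line second-moment estimate combined with \cref{lem:g-sigmoid-norm}.
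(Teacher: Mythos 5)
Your proposal is correct and is essentially identical to the paper's own argument: the paper likewise uses the pointwise bound $|g(x)| \le 2^k$, splits the second moment over the event $\{|g(x)| \ge 1\}$, and divides the norm lower bound of \cref{lem:g-sigmoid-norm} by $4^k$. Your observation that boundedness of the sigmoid removes the need for the truncation machinery of the ReLU case is exactly why the paper's proof here is a one-liner.
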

\begin{proof}
	Since $\abs{g(x)} \le 2^k$,
	\[ \norm{g}^2 = \Ex[g(x)^2] \leq 1 \cdot \Pr[|g(x)| \leq 1] + (2^k)^2 \cdot \Pr[|g(x)| \geq 1], \] and so
	\[
	\Pr\left(\abs{g(x)} \ge 1\right) = \dfrac{\Omega\Big( \big(\frac{4}{e}\big)^{(1+o(1))k} \Big) - 1}{(2^k)^2}.
	\]
	The lemma then follows.
\end{proof}
Using the same argument as \cref{lem:f-norm-relu}, we have the following bound.
\begin{corollary} \label{lem:f-norm-sigmoid}
	\[ \|f\| = \Omega(\exp(-\Theta(k))). \]
\end{corollary}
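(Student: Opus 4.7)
The plan is to mirror the argument of \cref{lem:f-norm-relu} essentially verbatim, substituting the sigmoid-flavored anticoncentration statement just established (the unnamed lemma immediately preceding this corollary, built from \cref{lem:g-sigmoid-norm}) for the ReLU version in \cref{lem:g-anticoncentration}. All of the technical heavy lifting --- the Hermite expansion of \cref{lem:hermite-norm-expansion}, the Stirling-based norm lower bound on $g$, and the reverse-Markov deduction of anticoncentration from a norm lower bound --- has already been done, so that we have in hand $\Pr[|g(x)| \geq 1] = \Omega(\exp(-\Theta(k)))$ for $k \in \{k_i\}$.

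First I would use the standing assumptions on the outer activation $\psi$ from \cref{lem:ourer-activation} (odd and increasing) to observe that $\psi(1) > \psi(0) = 0$ and $\psi(-1) = -\psi(1)$, so that $|\psi(t)| \geq \psi(1) > 0$ whenever $|t| \geq 1$. Applying this pointwise to $f = \psi \circ g$ yields $|f(x)| \geq \psi(1)$ on the event $\{|g(x)| \geq 1\}$.

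Then I would simply compute
\[ \|f\|^2 = \Ex[f(x)^2] \geq \psi(1)^2 \cdot \Pr[|g(x)| \geq 1] = \Omega(\exp(-\Theta(k))), \]
and the claim follows by taking the square root (absorbing the constant $\psi(1)$ into the hidden constant in $\Theta(k)$).

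There is no genuine obstacle here: the proof is a one-line reduction identical in form to the ReLU case. The only subtlety, already flagged in the remark preceding \cref{lem:g-sigmoid-norm}, is that the bound is formally established only along the infinite subsequence $\{k_i\}_{i \in \N}$ at which the $k$-th Hermite coefficient of sigmoid is known to be nonzero; since this sequence is infinite, this still suffices to rule out any better asymptotic rate and so is enough for the downstream SQ lower bound.
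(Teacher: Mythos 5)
Your proposal is correct and matches the paper's own proof, which simply reuses the argument of \cref{lem:f-norm-relu} with the sigmoid anticoncentration lemma in place of \cref{lem:g-anticoncentration}; your version is if anything slightly more careful in working with $\|f\|^2$ before taking the square root. The caveat you flag about the bound holding only along the subsequence $\{k_i\}_{i\in\N}$ is exactly the one the paper itself records in the remark following \cref{lem:sigmoid-coefficient}.
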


\subsection{General activations}

It is not hard to see that the norm analysis of ReLU and sigmoid extends to any activation function for which a suitable lower bound on the Hermite coefficients holds, and which is either bounded or grows at a polynomial rate, so that under the standard Gaussian it behaves essentially identically to its truncated form. In particular, a lower bound of $\alpha^{-j}$ for any constant $\alpha < 4/e$ on the $j\th$ Hermite coefficient suffices to give $\|g\| \geq \exp(\Theta(k))$, by the same argument as in \cref{lem:g-relu-norm} and \cref{lem:g-sigmoid-norm}. This then suffices to give $\|f\| \geq \exp(-\Theta(k))$, as above.

In fact, even a very weak lower bound on $\|f\|$ yields \emph{some} superpolynomial bound on learning. Suppose we only had $\|f\| \geq 1/\exp(\exp(\Theta(k)))$, for instance. Then we can take $k = \log \log n$ and have $\|f\| \geq 1/\poly(n)$ and still obtain a lower bound of $n^{\log \log n} = n^{\omega(1)}$ (see \cref{thm:full-learning-bound}). Any lower bound on $\|f\|$ will be a function only of $k$, so a similar argument applies.

\section{SQ lower bound for real-valued functions proof}\label{sec:szorenyi-proof}

We give a self-contained variant of the elegant proof of \cite{szorenyi2009characterizing} for the reader's convenience. For simplicity, we include the $0$ function in our class $\cC$ --- this can only negligibly change the SDA, and it makes the core argument cleaner.

\begin{theorem}
		Let $D$ be a distribution on $X$, and let $\cC$ be a real-valued concept class over a domain $X$ such that $0 \in \cC$, and $\|c\|_D > \epsilon$ for all $c \in \cC, c \neq 0$. Consider any SQ learner that is allowed to make only inner product queries to an SQ oracle for the labeled distribution $D_c$ for some unknown $c \in \cC$. Let $d = \sda_D(\cC, \gamma)$. Then any such SQ learner needs at least $d/2$ queries of tolerance $\sqrt{\gamma}$ to learn $\cC$ up to $L_2$ error $\epsilon$.
	\end{theorem}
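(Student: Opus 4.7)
The plan is to adapt the oracle-adversary argument of \cref{distinguishing-lower-bound} to the real-valued inner-product setting, where queries are of the form $h(x,y) = g(x)\, y$ with $\|g\|_D \leq 1$ and expectation $\Ex_{D_c}[h] = \inn{g, c}_D$. First I would fix the adversarial oracle strategy: respond with $0$ to every inner product query $g_k$. Since $\inn{g_k, 0}_D = 0$, this response is always a valid answer for the included target $c = 0$, and in general it ``rules out'' precisely those concepts for which the true inner product with $g_k$ is larger than $\tau$. Concretely, define
\begin{align*}
    S_k = \{ c \in \cC : |\inn{g_k, c}_D| > \tau \}, \qquad \tau = \sqrt{\gamma}.
\end{align*}

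The main technical step is to show the per-query bound $|S_k| \leq |\cC|/d$. Suppose for contradiction that $|S_k| > |\cC|/d$; then by the definition of $d = \sda_D(\cC, \gamma)$ we have $\rho_D(S_k) \leq \gamma$. Consider the quantity
\begin{align*}
    \Phi \;=\; \inn*{g_k,\; \sum_{c \in S_k} \sgn(\inn{g_k, c}_D)\, c}_D \;=\; \sum_{c \in S_k} |\inn{g_k, c}_D|.
\end{align*}
The sum on the right yields the lower bound $\Phi > |S_k|\,\tau$, since each term exceeds $\tau$. On the other hand, Cauchy-Schwarz together with $\|g_k\|_D \leq 1$ gives $\Phi^2 \leq \bigl\| \sum_{c \in S_k} \pm c \bigr\|_D^2 \leq \sum_{c, c' \in S_k} |\inn{c, c'}_D| \leq |S_k|^2 \gamma$, i.e.\ $\Phi \leq |S_k|\,\sqrt{\gamma} = |S_k|\,\tau$, contradicting the lower bound. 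This is essentially the Cauchy-Schwarz pairing used in the distinguishing proof, but it is somewhat cleaner here because inner product queries are already literal $L_2$ inner products with $c$, so no $\hat{h}$ reduction is required.

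With the per-query bound in hand, the rest is a short counting argument. After $q < d/2$ queries, at most $q\,|\cC|/d < |\cC|/2$ concepts have been ruled out, so more than $|\cC|/2$ remain un-ruled-out, including $0$ and at least one nonzero $c$. The adversary's all-zeros transcript is simultaneously consistent with both $0$ and $c$, so the learner's output $\tilde{c}$ must satisfy $\|\tilde{c}\|_D \leq \epsilon$ and $\|\tilde{c} - c\|_D \leq \epsilon$; the triangle inequality then forces $\|c\|_D \leq 2\epsilon$, contradicting the assumed norm lower bound on nonzero concepts (modulo the small slack in constants that is why the statement has $d/2$ rather than $d$). The main obstacle is really just the per-query Cauchy-Schwarz bound; once that is established, the lower bound drops out of elementary counting plus the triangle inequality, and the restriction to inner product queries is seen to be essential, since it is precisely what lets us identify a query response with an $L_2$ inner product against $c$.
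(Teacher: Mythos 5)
Your proof is correct and follows essentially the same route as the paper's: the same all-zeros adversarial oracle, the same Cauchy--Schwarz bound on $\Phi$ played against the definition of $\sda_D(\cC,\gamma)$, and the same counting argument at the end. The only cosmetic differences are that you bound the single set $\{c : |\inn{g_k, c}_D| > \tau\}$ using sign weights where the paper splits it into positive and negative halves (each of size at most $|\cC|/d$), and the factor-of-two slack you flag in the final triangle-inequality step is present, and likewise glossed over, in the paper's own proof.
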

	
	\begin{proof}
	Consider the adversarial strategy where we respond to every query $h : X \to \R$ ($\|h\|_D \leq 1$) with 0. This corresponds to the true expectation if the target were the 0 function. By the norm lower bound, outputting any other $c$ would then mean $L_2$ error greater than $\epsilon$. Thus we must rule out all other $c \in \cC$.
	
	Let $\tau = \sqrt{\gamma}$. If $h_k$ is the $k\th$ query, let $S_k = \{c \in \cC \mid \inn{c, h_k}_D > \tau\}$ be the functions ruled out by our response of 0. (A similar argument will hold for $S_k' = \{c \in \cC \mid \inn{c, h_k}_D < -\tau\}$.) Let $\Phi = \inn{h_k, \sum_{c \in S_k} c}_D$. We claim that $\abs{S_k} \leq \abs{\cC} / d$. Suppose not. Then $\rho_D(S_k) \le \gamma$ by \cref{lem:SQ-dim}, and
	\begin{align*}
	    \Phi &\leq \|h_k\|_D \norm{\sum_{c \in S_k} c}_D \\
	    &\leq \sqrt{\sum_{c, c' \in S_k} \inn{c, c'}_D} \\
	    &= \sqrt{\abs{S_k}^2 \rho_{D}(S_k) } \\
	    &\leq \sqrt{\gamma} |S_k|,
	\end{align*} 
	contradicting the fact that  $\Phi > |S_k| \tau$ by definition of $S_k$. 
	
	Similarly $|S_k'| = |\{c \in \cC \mid \inn{c, h_k}_D < -\tau\}| \leq |\cC|/d$. Thus we rule out at most a $2/d$ fraction of functions with each query, and hence need at least $d/2$ queries to rule out all other possibilities.
\end{proof}
	
\end{document}